\author{%
Alankrita Bhatt \\CMS, Caltech\\\texttt{abhatt@caltech.edu} \\
 \and
 Nika Haghtalab\\ EECS, UC Berkeley\\\texttt{nika@berkeley.edu}\\
\and
 Abhishek Shetty\\EECS, UC Berkeley\\ \texttt{shetty@berkeley.edu}
}
\theoremstyle{definition}
\newtheorem{definition}{Definition}[section]
\theoremstyle{remark}
\newtheorem*{remark}{Remark}
\theoremstyle{plain}
\newtheorem{theorem}{Theorem}[section]
\theoremstyle{plain}
\newtheorem{corollary}{Corollary}[theorem]
\theoremstyle{plain}
\theoremstyle{plain}
\newtheorem{lemma}[theorem]{Lemma}
\newcommand{\br}{\mathbb{R}}
\newcommand{\Secref}[1]{\hyperref[#1]{Section \ref*{#1}}}
\newcommand{\Appref}[1]{\hyperref[#1]{Appendix \ref*{#1}}}
\crefname{equation}{}{}
\crefname{lemma}{Lemma}{Lemmas}
\crefname{section}{Section}{Sections}
\crefname{subsubsubsection}{Section}{Sections}
\crefname{remark}{Remark}{Remarks}
\crefname{figure}{Figure}{Figures}
\crefname{table}{Table}{Tables}
\Crefname{lemma}{Lemma}{Lemmas}
\crefname{theorem}{Theorem}{Theorems}
\Crefname{theorem}{Theorem}{Theorems}
\DeclareMathOperator*{\Ex}{\mathbb{E}}
\DeclareMathOperator*{\argmin}{\arg\!\min}
\newcommand{\cX}{\mathcal{X}}
\newcommand{\cY}{\mathcal{Y}}
\newcommand{\cH}{\mathcal{H}}
\newcommand{\cD}{\mathcal{D}}
\newcommand{\cQ}{\mathcal{Q}}
\DeclareMathOperator*{\Expec}{\mathbb{E}}
\newcommand{\sD}{\mathscr{D}}
\newcommand{\unif}{\mathcal{U}}
\newcommand{\Ber}{\mathrm{Bernoulli}}
\def\Expt{\mathbb{E}}
\def\Prob{\mathbb{P}}
\def\d{\delta}
\def\e{\epsilon}
\def\t{\theta}
\def\Ac{{\mathcal A}}
\def\Fc{{\mathcal F}}
\def\Gc{{\mathcal G}}
\def\Nc{{\mathcal N}}
\def\Xc{{\mathcal X}}
\def\Yc{{\mathcal Y}}
\def\indic{\mathds{1}}
\def\Reg{\mathcal{R}}
\def\Dist{\mathcal{D}}
\def\jointdist{\mathscr{D}}
\def\Expt{\mathbb{E}}
\def\Prob{\mathbb{P}}
\def\RegTransductive{\underline{\mathcal{R}}}
\def\qstrat{\mathscr{Q}}
\def\ball{\mathbb{B}}
\def\VCclass{\mathcal{F}^{\mathrm{VC}}}
\def\VCstrat{\mathscr{Q}^{\mathrm{VC}}}
\def\qftpl{q^{\mathrm{FTPL}}}
\def\ftplstrat{\mathscr{Q}^{\mathrm{FTPL}}}
\newcommandx{\smoothdists}[2][1= \sigma , 2 = \mu]{ \Delta_{\sigma} \left( \mu \right) }
\newcommand{\rad}{  \mathrm{Rad} }
\newcommand{\cls}{ \mathcal{F}}
\newcommandx{\sregret}[3][1=T,2= \cls ,3= \sigma ]{\mathcal{R}_{#1} \left( #2 , #3 \right)}
\newcommandx{\regret}[2][1=T,2= \cls  ]{\mathcal{R}_{#1} \left( #2  \right)}
\newcommandx{\adist}[1][1=t]{\mathcal{D}_{#1}}
\newcommandx{\probpred}[1][1=t]{\hat{p}_{#1}}
\title{\textbf{Smoothed Analysis of Sequential Probability Assignment}}
\begin{document}

\maketitle

\begin{abstract}
We initiate the study of smoothed analysis for the sequential probability assignment problem with contexts. We  study information-theoretically optimal minmax rates as well as a framework for algorithmic reduction involving the \emph{maximum likelihood estimator} oracle.
Our approach establishes a general-purpose reduction from minimax rates for sequential probability assignment for smoothed adversaries to minimax rates for transductive learning. This leads to  optimal (logarithmic) fast rates for parametric classes and classes with finite VC dimension. %
On the algorithmic front, we develop an algorithm that efficiently taps into the MLE oracle, for general classes of functions.
We show that under general conditions this algorithmic  approach yields  sublinear regret. 
\end{abstract}

\allowdisplaybreaks
\section{Introduction}
Sequential probability assignment --- also known as online learning under the logarithmic loss --- is a fundamental problem with far-reaching impact on information theory, statistics, finance, optimization, and sequential decision making~\citep{Rissanen83a,Rissanen84,Cover1991,Feder--Merhav--Gutman92,Xie--Barron97,Feder--Merhav98,Xie--Barron00,Yang--Barron99,jiao2013universal,orabona2016coin,foster2018logistic}.
In recent years, methods for incorporating contexts or side information into sequential probability assignment have gained much attention~\citep{RSLogLoss15,Fogel--Feder17,Feder--Fogel18,foster2018logistic,Bhatt--Kim21,Bilodeau--Foster--Roy21,wu2022expected}, in part due to their newly forged connection to sequential decision making applications, the contextual bandit problem, and learning in Markov Decision Processes (MDPs)  (see e.g.~\citep{foster2021efficient} and~\citep{foster2021statistical}).
In this setting, a forecaster who has access to historical data $x_{1:t-1}, y_{1:t-1}$ consisting of contexts $x_\tau$ (e.g., day $\tau$'s meteorological information) and the outcomes $y_\tau\in \{0,1\}$ (e.g., whether $\tau$ was a rainy day) wishes to predict $y_t$ given a new context $x_t$. The forecaster uses a \emph{probability assignment}  $p_t$ to estimate the probability of $y_t = 1$ outcome and incurs the logarithmic loss, i.e.,  $-\log p_t(y_t)$, which rewards the forecasters for having assigned high probability to the realized outcome. The goal of the forecaster is to suffer low \emph{regret} against a chosen reference class of predictors.

A large body of prior work on sequential probability assignment with contexts has focused on settings where contexts are presented i.i.d.\ from an unknown distribution (see~\citep{Fogel--Feder17,Bhatt--Kim21,Bilodeau--Foster--Roy21,wu2022expected} and the references within); this problem is also referred to as conditional density estimation. 
In these cases, sequential probability assignment is known to enjoy small regret for several reference classes such as Vapnik--Chervonenkis (VC) classes.
On the other hand, attempts to consider context distributions that evolve unpredictably and adversarially have faced strong impossibility results even for simple reference classes of predictors. 
For example, for the reference class of simple one-dimensional thresholds assigning $p_t = \theta_0 \indic\{x_t \le a\} + \theta_1 \indic\{x_t > a\}$ for $a \in [0,1]$, regret is bounded by $O(\log T)$ in the i.i.d.\ case~\citep{Fogel--Feder17} but is lower bounded by $\Omega(T)$ when the sequence of contexts is chosen adversarially (folklore e.g.~\cite{littlestone}).
In the face of the increasing need to adapt to evolving contexts in modern applications, these impossibility results
indicate that new models of adversarial behavior must be considered for obtaining rigorous guarantees that guide the design of sequential probability assignment in practical applications.

In recent years, \emph{smoothed analysis} of adaptive adversaries~\citep{Haghtalab--Roughgarden--Shetty21,Rakhlin--Sridharan--Tewari11} has emerged as a framework for going beyond the worst-case adversaries while  making minimal assumptions about the adaptive process that generates a sequence.
In this setting, contexts are chosen from an evolving sequence of so-called $\sigma$-smooth distributions, whose density is  bounded above by $1/\sigma$ times that of a base measure (such as the uniform distribution).
Remarkably, these methods, established by \citet{Haghtalab--Roughgarden--Shetty21} for $0$-$1$ loss and extended to regression by \citet{oracle_Efficient,BDGR22}, have established performance guarantees for the sequential prediction problem that matches the optimal performance in the i.i.d setting.
This raises the question as to whether the \emph{sequential probability assignment} problem may similarly enjoy improved minmax regret bounds for smoothed adaptive sequences.

Beyond minmax rates, an important feature of probability assignment and, its analogue, density estimation is the availability of fundamental and natural 
estimation techniques such as
\emph{maximum likelihood estimation (MLE).
}
For i.i.d. sequences, under general conditions, MLE is known to be optimal 
asymptotically and often serves as a starting point for designing more sophisticated estimators.
Going beyond i.i.d.\ sequences,  we ask whether MLE can be made to achieve good statistical behavior on adaptive sequences.
More generally, algorithmic
perspective %
is increasingly important for the sequential probability assignment problem and its applications to contextual bandits and reinforcement learning (where algorithm design is as fundamental a consideration as minmax rates~\citep{agarwal2014taming,simchi2022bypassing, foster2020beyond,langford2007vowpal,foster2020instance}).
In this space, \emph{oracle-efficient} sequential decision making algorithms that repurpose existing offline algorithmic routines have received special interest~\citep{kalai2005efficient,dudik2020oracle,wang2022adaptive,kakade2007playing,simchi2022bypassing}. 
Here again, recent progress on smoothed analysis for sequential  prediction with $0$-$1$ loss and regression loss~\citep{Haghtalab--Roughgarden--Shetty21,BDGR22} has shown promise in bridging the computational and information-theoretical gaps between what is obtainable in the i.i.d.\ case and for smoothed adaptive sequences.

In this paper, we initiate the study of smoothed analysis for sequential probability assignment and seek to understand fundamental information-theoretic limits on the \emph{minmax regret} as well as design \emph{natural and oracle-efficient algorithms} for this problem. %
Additionally, we investigate whether in the smoothed analysis setting, maximum likelihood estimation can efficiently 
address
sequential probability assignment while achieving  small regret. 
To the best of our knowledge, our work is the first to consider oracle-efficient algorithms (and particularly the MLE) for the sequential probability assignment problem.

\subsection{Main results}

{\bf Reduction to transductive learning.}
Our first main result is a reduction from regret bounds against a smoothed adversary to regret bounds against (a generalized version of) a transductive adversary.
That is, we show that the minimax regret in the smoothed analysis setting is upper bounded by the minimax regret in the setting where a set of contexts is provided to the learner and the adversary is constrained to picking the contexts from this set.
For $\Fc$, a class of hypotheses mapping contexts to $[0,1]$, let us define the minmax regret in the transductive case over $T$ times steps when a context set of size $M$ is provided to the learner to be $\RegTransductive_T^{M}(\Fc)$.
 We establish in \cref{thm:TransductiveSmoothedRegBds} that for all $\sigma$-smooth sequences %
 the minmax regret $\Reg_T(\Fc,\sigma)$ satisfies, for any $k>1$,
\[
 \Reg_T(\Fc,\sigma) \le \RegTransductive_{T}^{kT}(\Fc) + T^2(1-\sigma)^k.
\]
Furthermore, in \cref{thm:MinMaxSmoothedRegVCDim} we upper bound $\RegTransductive_T^{kT}(\Fc)$ by connecting the worst case adversarial regret in this setting to the scale-sensitive VC dimension of $\Fc$ which is a prototypical offline complexity of the class.
Our results obtain a logarithmic dependence on $1/\sigma$.
In particular, in \cref{cor:RatesParametricNonparametric}, we show that for VC classes (and parametric classes) the regret is bounded by
$\Reg_T(\Fc,\sigma)\le O\left( d\log\left(\frac{T}{\sigma}\right) \right)$,
where $d$ is the VC dimension of class $\Fc$.

\noindent{\bf Efficient Reduction from Sequential Probability Assignment to MLE.}
Our second contribution is initiating the study of oracle-efficient algorithm design for sequential probability assignment. 
In particular,  for small alphabet size, we design a natural algorithm (\cref{alg:FTPL}) that efficiently uses an MLE oracle and achieves sublinear regret in the smoothed setting.
Our \cref{thm:regretbound} gives a general regret bound in terms of the statistical complexity of the class $\Fc$ and the smoothness parameter $\sigma$.
For VC classes, this achieves regret rate of $ T^{4/5} \sqrt{\frac{d}{\sigma}}$.  
To the best of our knowledge, this is the first \emph{oracle-efficient} algorithm and analysis of the follow-the-perturbed-leader style algorithms for the logarithmic loss.

\noindent{\bf Probability assignment for VC classes.}
For VC classes $\Fc$, we explicitly construct sequential probability assignments and establish their regret guarantees in the smoothed setting. That is, we construct a probability assignment based on a Bayesian mixture over $\Fc$ that satisfies $\Reg_T(\Fc,\sigma) \le Cd\log\left(\frac{T}{\sigma}\right)$ where $d$ is the VC dimension of class $\Fc$.
While this approach is not oracle-efficient, it indeed achieves regret bound with optimal dependence on $T$ and $\sigma$.
This motivates a natural direction for future work as to whether such mixture-based methods can be implemented oracle efficiently or if there is a tradeoff between the regret and the computational complexity of sequential probability assignment.

\section{Preliminaries}\label{sec:Preliminaries}

Let $\Xc$ be a set of \emph{contexts} and $\Yc = \{0,1\}$. Then, the problem being studied entails a sequential game where at each timestep $t$, based on the history of contexts $x_{1:t} := (x_1,\dotsc,x_t)$ where $x_i \in \Xc$  and associated bits $y_{1:t-1} \in \{0,1\}^{t-1}$, the player must assign a probability $q(\cdot|x_{1:t},y_{1:t-1})$ to what the upcoming bit $y_t$ will be. Once the bit $y_t$ is revealed (possibly in an adversarial fashion) the player incurs loss $-\log q(y_t|x_{1:t},y_{1:t-1})$ and the game proceeds to the next step. For a \emph{hypothesis class} $\Fc \subset \{\Xc \to [0,1]\}$, the associated regret for a probability assignment strategy $\qstrat = \{q(\cdot|x_{1:t}, y_{1:t-1})\}_{t=1}^n$ for a fixed $x_{1:T}, y_{1:T}$ is 
\begin{align}\label{eq:RegDefFixedXY}
    \Reg_T(\Fc,x_{1:T},y_{1:T},\qstrat) = \sum_{t=1}^T \log \frac{1}{q(y_t|x_{1:t},y_{1:t-1})} - \inf_{f \in \Fc} \sum_{t=1}^T \log \frac{1}{p_f(y_t|x_t)}
\end{align}
where $p_f(1|x_t) = f(x_t)$; i.e. the function $f$ assigns probability $\mathrm{Bern}(f(x_t))$ to the upcoming bit given the context $x_t$. 

Our statistical results apply to a general loss function $\ell$ and general actions of the learner $a_t$. For a set of inputs $ \left\{ (x_i , y_i) \right\}_{i=1}^T $ specified by the adversary and a set of actions $ \left\{ a_i \right\}_{i=1}^T $ of the learner, regret is defined by
\begin{align}\label{eq:RegretDefFixed}
    \Reg_T(\Fc, x_{1:T}, y_{1:T}, a_{1:T}) = \sum_{t=1}^T \ell( a_t ,(x_t,y_t)  ) - \inf_{f \in \Fc}\sum_{t=1}^T \ell( f(x_t), (x_t,y_t)),
\end{align}
where for log-loss $a_t = q(\cdot|x_{1:t}, y_{1:t-1})$; so that the action is a probability mass function (pmf) over $\{0,1\}$.

The regret in~\eqref{eq:RegDefFixedXY} is often studied under various adversary models; i.e. various different probabilistic assumptions (or lack thereof) on the model generating $x_t$ and $y_t$. In this work, we consider worst-case $y_t$ (in contrast to the \emph{realizable} setting where $Y_t \sim \mathrm{Bern}(f^*(x_t))$ for a fixed unknown $f^* \in \Fc$) and $X_t \sim \Dist_t$ where $\Dist_t$s form an adaptive sequence of smooth distributions.

\begin{definition}[Smooth distribution and adversary~\citep{hrs_neurips}] \label{def:smooth}
    Consider a fixed and known base distribution $\mu$ on $\Xc$ (such as the uniform distribution if $\Xc$ supports it). A distribution $\Dist$ on $\Xc$ is said to be $\sigma$-smooth if for all measurable sets $A \subseteq \Xc, \Dist(A) \le \frac{\mu(A)}{\sigma}$. We denote the set of all $\sigma$-smooth distributions by $\smoothdists$.
    An adversary, characterized by a joint distribution $\jointdist$ with $X_t \sim \Dist_t$ (where $\Dist_t$ may possibly depend on the history) is said to be a $\sigma$-smooth adaptive adversary if $\Dist_t\in \smoothdists$ for all $t \in \{1,\dotsc,T\}$.
\end{definition}

The minmax regret for $\sigma$-smooth adaptive adversaries is then given by
\begin{align}\label{eq:MinMaxRegDef}
    \Reg_T(\Fc,\sigma) = \inf_{\qstrat} \sup_{\sigma\text{-smoothed } \jointdist }\Expt_{X_{1:T} \sim \jointdist}\left[\max_{y_{1:T}} \Reg_T(\Fc,X_{1:T},y_{1:T},\qstrat) \right],
\end{align}
where $\qstrat$ is the set of all probability assignment strategies.

We are particularly interested in how geometric properties of the function class $\Fc$ affect $\Reg_T(\Fc,\sigma)$. There are several notions of covering numbers and combinatorial dimensions that quantify the ``richness" and complexity of a class, but the \emph{scale-sensitive VC dimension} will be of particular interest to us and is invoked in our results.
\begin{definition}[Scale-sensitive VC dimension]\label{def:ScaleSensitiveVC}
    Let $\Fc$ be a function class. 
    For any $\alpha > 0$ and points $x_1, \dots , x_m \in \mathcal{X}$, we say that $\Fc$ shatters the set $x_1,\dots, x_m$ at scale $\alpha$ if there exist $s_1\dots s_m \in \mathbb{R}$ such that for each $\epsilon\in \left\{ -1,1 \right\}^n $ there exists a function $f \in \Fc$ such that
$\epsilon_i \left( f(x_i) - s_i \right) \geq \frac{\alpha}{2}. 
$
    The scale sensitive VC dimension at scale $\alpha$  of $\Fc$, denoted by $\mathrm{VC}\left( \Fc , \alpha \right) $  is defined as the largest $m$ such that there is a set of $m$ points $x_1, \dots , x_m \in \mathcal{X}$ such that $\Fc$ shatters the set at scale $\alpha$.
    The (traditional) VC dimension of a binary class $\Fc$ is defined as $\mathrm{VC} \left( \Fc  \right) = \lim_{\alpha \to 0^{+}} \mathrm{VC}\left( \Fc , \alpha \right) $. 
\end{definition}

Throughout, we use the following result of~\cite{Haghtalab--Roughgarden--Shetty21} about $\sigma$-smooth distributions. This results aids us in reduction from smoothed learning to transductive learning.
\begin{theorem}[Coupling Lemma of \cite{Haghtalab--Roughgarden--Shetty21}] \label{thm:coupling} 
        Let $ \sD_\sigma $ be an adaptive sequence of $t$ $\sigma$-smooth distributions on $\mathcal{X}$. 
        There is a coupling $\Pi$ such that 
 \[
 (X_1 , Z_{1,1} , \dots , Z_{1,K}, \dots,X_t , Z_{t,1}, \dots , Z_{t,K}  ) \sim \Pi
 \]
        satisfy that: 
        \begin{enumerate}
            \item $X_1 , \dots , X_t$ is distributed according $\sD_\sigma$,
            \item For every $j \leq t$, $\{ Z_{i,k} \}_{{i \geq j},{k\in[K]}} $ are uniformly and independently distributed on $\mathcal{X}$, conditioned on $X_1 , \dots, X_{j-1}$.  
            \item \label{item:failure-coupling} For any $t$, with probability at least $1 - \left( 1 - \sigma \right)^{K}$, $ X_t \in \{Z_{t,k}\}_{k=1:K}$. 
        \end{enumerate}
\end{theorem}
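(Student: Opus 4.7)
The plan is to build the coupling explicitly via rejection sampling. Since $\Dist_t$ is $\sigma$-smooth with respect to $\mu$, the Radon--Nikodym derivative $f_t := d\Dist_t/d\mu$ satisfies $f_t \leq 1/\sigma$ pointwise; equivalently, $\sigma f_t \leq 1$ everywhere, which is precisely the condition that lets me thin a $\mu$-sample into a $\Dist_t$-sample with per-trial acceptance probability $\sigma$.

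I would first fix a probability space with one independent randomness block per round, $\omega_t = (Z_{t,1}, \ldots, Z_{t,K}, U_{t,1}, \ldots, U_{t,K}, V_t)$, where the $Z_{t,k}$ are i.i.d.\ from $\mu$, the $U_{t,k}$ are i.i.d.\ uniform on $[0,1]$, and $V_t$ is an auxiliary uniform used only as a fallback. The blocks $\omega_1, \omega_2, \ldots$ are mutually independent. Then I would define $X_t$ recursively: having constructed $X_{1:t-1}$ and hence $\Dist_t$, set $B_{t,k} = \indic\{U_{t,k} \leq \sigma f_t(Z_{t,k})\}$; if some $B_{t,k} = 1$, let $X_t := Z_{t,k^*}$ with $k^* = \min\{k : B_{t,k} = 1\}$; otherwise draw $X_t \sim \Dist_t$ using the fresh randomness $V_t$.

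Properties 1 and 3 then follow by short direct computations. For Property 1, a standard rejection sampling identity gives that, conditionally on $X_{1:t-1}$, each accepted sample is exactly $\Dist_t$-distributed; combined with the $V_t$-fallback on the failure event, this yields $X_t \mid X_{1:t-1} \sim \Dist_t$, and the joint law of $X_{1:t}$ agrees with $\sD_\sigma$ by induction on $t$. For Property 3, the per-index acceptance probability is $\Ex_{Z \sim \mu}[\sigma f_t(Z)] = \sigma$, so by independence of the $K$ thinnings the probability that at least one $Z_{t,k}$ is accepted equals $1 - (1-\sigma)^K$, and on this event $X_t \in \{Z_{t,k}\}_{k=1:K}$ by construction.

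The subtle piece, and the step I would think through most carefully, is Property 2, which asserts that the future $Z$'s are i.i.d.\ uniform conditionally on the past $X$'s. The crucial observation is that by construction $X_{1:j-1}$ is a measurable function of $\omega_1, \ldots, \omega_{j-1}$, while each $Z_{i,k}$ with $i \geq j$ lives inside $\omega_i$, which is independent of those earlier blocks. Hence $\{Z_{i,k}\}_{i \geq j, k \in [K]}$ is jointly independent of $X_{1:j-1}$ and retains its unconditional i.i.d.\ uniform law. This is the reason for introducing the separate fallback randomness $V_t$: if we instead tried to produce $X_t$ on the failure event by, say, reusing one of the $Z_{t,k}$'s, then $X_t$ would carry information about that $Z_{t,k}$ and the desired conditional uniformity would break down for all later rounds.
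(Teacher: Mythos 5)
The paper does not actually prove this statement—it cites it directly from \cite{Haghtalab--Roughgarden--Shetty21}—so there is no internal proof to compare against. Your rejection-sampling construction is the standard argument from that reference, and your verification of Properties 1--3 is correct: the per-round block structure gives Property 2 by independence, the first-accepted-index rule gives the exact conditional marginal $\cD_t$ on the success event (your cited rejection-sampling identity is right), and the acceptance probability per index is $\Ex_{Z\sim\mu}[\sigma f_t(Z)]=\sigma$, giving Property 3.

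One remark in your proposal is inaccurate, though it does not damage the proof. You assert that the fallback randomness $V_t$ is needed to preserve the conditional uniformity of Property 2 for later rounds, arguing that if $X_t$ were built from some $Z_{t,k}$ on the failure event it ``would carry information about that $Z_{t,k}$.'' But Property 2 for index $j$ only conditions on $X_{1:j-1}$, and for every $j\le t$ the random variables $\{Z_{i,k}\}_{i\ge j,k\in[K]}$ live in blocks $\omega_j,\omega_{j+1},\dots$ that are independent of $\omega_1,\dots,\omega_{j-1}$, hence independent of $X_{1:j-1}$, \emph{however} $X_t$ is defined as a function of $\omega_t$. So Property 2 would survive even without $V_t$. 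The real reason $V_t$ is required is Property 1: conditioned on the failure event (all $K$ indices rejected), each $Z_{t,k}$ has density proportional to $1-\sigma f_t$ with respect to $\mu$, which is neither $\mu$ nor $\cD_t$; reusing a rejected $Z_{t,k}$ as $X_t$ would therefore give the wrong conditional law of $X_t$ given $X_{1:t-1}$. The fresh draw $V_t$ corrects exactly this marginal, and nothing else.
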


\section{General reduction to transductive learning}\label{sec:TransductiveReduction}

In this section, we will consider the minimax regret for the smoothed online learning game with respect to the loss function\footnote{The reduction works for general loss functions with the property that the worst-case regret for horizon $T$ is bounded by $T$, but one can think about $\ell$ being the log-loss throughout this section for concreteness.} $\ell$ against a general class of functions $\Fc$. 
In \cref{sec:reduction}, we will show that the minimax regret can be reduced to the minimax regret for a version of transductive learning with respect to the same loss function and class of functions. In \cref{sec:boundingtransductive}, we give general upper bounds for the transductive setting.
There is a subtle but important difference between reduction that directly involve regret compared to recent efforts (such as~\cite{Haghtalab--Roughgarden--Shetty21,BDGR22,oracle_Efficient}) using reductions between proxies of regret, such as covering numbers and sequential complexities. This is particularly important for log loss since its complexity is not captured by covering numbers.
We discuss this point further in~\cref{sec:withoutcovering}.

\subsection{Regret-to-regret Reduction}\label{sec:reduction}

We work with a general loss function $\ell$ and general actions of the learner $a_t$.
We note that, we can write the minmax value of the smoothed setting in extensive form as
\begin{align}\label{eq:MinMaxRegretSmoothedFixedDist}
    \Reg_T(\Fc,\sigma) = \sup_{\Dist_1 \in \smoothdists} \Expt_{X_1\sim \Dist_1} \inf_{a_1} \sup_{y_1}& \sup_{\Dist_2 \in \smoothdists} \Expt_{X_2 \sim \Dist_2} \inf_{a_2} \sup_{y_2}\dotsc \\
    &\dotsc \sup_{\Dist_T \in \smoothdists} \Expt_{X_T \sim \Dist_T} \inf_{a_T} \sup_{y_T}  \Reg(\Fc, X_{1:T},  y_{1:T} , a_{1:T}). 
\end{align}

In order to bound this, we consider a generalization of the notion of online learning that is referred to as transductive learning.
In this setting, at the start of the interaction the adversary chooses a set of contexts $ X =  \left\{ X_i \right\}_{i=1}^M$ for some $M \ge T$ and provides this to the player. 
The game proceeds as before with the adversary picking $ \left( x_t , y_t \right)  $ at time $t$ and the learner picking an action $a_t$ and suffering a loss $\ell(a_t , (x_t,y_t))$.
However, the adversary is now constrained to pick $x_t \in X$ at all times $t$.
We can then define the minmax regret indexed by $X$ as 
\begin{align}\label{eq:RegTransductiveDefn}
    \RegTransductive_{T}(\Fc,X) := \left[\max_{x_1 \in X } \inf_{a_1}\sup_{y_1} \dotsc \max_{x_T \in X} \inf_{a_T} \sup_{y_T} \Reg(\Fc,x_{1:T},y_{1:T},a_{1:T})\right].
\end{align}
Furthermore, define the worst-case transductive learning regret for sets of size $M$ as  
$\RegTransductive_{T}^M(\Fc) = \max_{X \subseteq \Xc, |X| = M} \RegTransductive_{T}(\Fc, X).$
In the following theorem, we show that the regret against $\sigma$-smoothed adversaries is bounded by the regret in the transductive learning setting when the set of contexts is drawn from the base distribution $\mu$.

\begin{theorem}\label{thm:TransductiveSmoothedRegBds}
Let $\Fc$ be any class of functions from $\Xc$ to $\mathbb{R}$ and let $\sigma \in (0,1]$.
Then, for any $T$ and $k$, we have 
\begin{align} \label{eq:RegTransductiveSmoothedRegBds}
    \Reg_T(\Fc,\sigma) \le \RegTransductive_T^{kT}(\Fc) + T^2(1-\sigma)^k.
\end{align}
\end{theorem}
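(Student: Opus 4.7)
The plan is to couple the smooth samples $X_t$ with auxiliary iid uniform draws $Z_{t,1},\dots,Z_{t,k}\sim\mu$ via \cref{thm:coupling} (applied with $K=k$), and use these auxiliary draws as a synthetic transductive context set. By property (3), $X_t\in\{Z_{t,1},\dots,Z_{t,k}\}$ with probability $\ge 1-(1-\sigma)^k$, and by property (2) the $Z_{t,j}$ are marginally iid $\mu$, so the whole set $Z := \bigcup_{t\in[T]}\{Z_{t,1},\dots,Z_{t,k}\}$ (of cardinality at most $kT$) may be exposed up front. The learner's strategy is then to internally run the minmax-optimal transductive algorithm for the set $Z$: at round $t$, on receiving $X_t$, if $X_t\in Z$ feed $X_t$ to the transductive algorithm and play its prescribed action; otherwise play any fixed fallback guaranteeing per-round regret at most $1$ (available by the standing assumption on $\ell$ noted in the footnote).

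For the analysis, let $\mathcal{E}=\{X_t\in\{Z_{t,1},\dots,Z_{t,k}\}\text{ for all }t\in[T]\}$; a union bound on property (3) of \cref{thm:coupling} gives $\Prob(\mathcal{E}^c)\le T(1-\sigma)^k$. On $\mathcal{E}$, the realized interaction $(X_{1:T},y_{1:T})$ is a valid play of the transductive game on $Z$, and since the learner is running the minmax-optimal transductive algorithm on $Z$ the regret (maximized over $y_{1:T}$) is at most $\RegTransductive_T(\Fc,Z)\le\RegTransductive_T^{kT}(\Fc)$. On $\mathcal{E}^c$ the regret is at most $T$ by the loss-function assumption. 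Taking expectations under the coupling and then a supremum over $\sigma$-smoothed adaptive $\jointdist$ yields
\[
\Reg_T(\Fc,\sigma)\;\le\;\RegTransductive_T^{kT}(\Fc)\;+\;T\cdot T(1-\sigma)^k,
\]
which is exactly \eqref{eq:RegTransductiveSmoothedRegBds}.

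The main subtlety I anticipate is the doubly adaptive nature of the game: $\Dist_t$ may depend on $(X_{<t},y_{<t})$ while $y_t$ may depend on the learner's past actions (and hence on the internal randomness $Z$). This is precisely what \cref{thm:coupling} is designed to absorb, since the coupling $\Pi$ exists for every adaptive smooth sequence and preserves the marginal of $X_{1:T}$, so expected regret under the coupling coincides with expected regret in the original smoothed game; meanwhile the transductive minmax value $\RegTransductive_T(\Fc,Z)$ already accommodates worst-case adaptive $y$-play against the transductive algorithm's outputs. No further argument beyond combining \cref{thm:coupling} with the definition of $\RegTransductive_T(\Fc,Z)$ in \eqref{eq:RegTransductiveDefn} should be necessary.
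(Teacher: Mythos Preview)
Your proposal has a subtle but genuine gap in how the coupling is invoked. You propose that the learner be handed the set $Z=\{Z_{t,j}\}$ up front and run the optimal transductive algorithm on it, justified by ``property (2) the $Z_{t,j}$ are marginally iid $\mu$, so the whole set $Z$ may be exposed up front.'' The difficulty is that the coupling $\Pi$ of \cref{thm:coupling} is constructed \emph{for a given adaptive smooth sequence $\jointdist$}; the event $\{X_t\in Z_t\}$ has high probability under $\Pi$, not under the product law in which the learner draws $Z\sim\mu^{kT}$ as private randomness independent of the adversary. Under that product law (which is what an actual learner strategy produces), $Z$ and $X_{1:T}$ are independent and $\{X_t\in Z_t\}$ need not hold at all---for a nonatomic $\mu$ it has probability zero. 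Your sentence ``expected regret under the coupling coincides with expected regret in the original smoothed game'' would be correct if the regret depended only on $X_{1:T}$, but your learner's actions depend on $Z$ too, so the expectation is a function of the \emph{joint} law of $(X_{1:T},Z)$, and replacing the product law by $\Pi$ changes it.

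The paper's proof sidesteps this by never trying to realize $Z$ as part of a learner strategy. It works directly with the extensive-form minmax value \eqref{eq:MinMaxRegretSmoothedFixedDist} and proceeds inductively in $t$: at step $t$ one faces $\sup_{\Dist_t}\Expt_{X_t\sim\Dist_t}[\cdots]$ with $\Dist_t$ already fixed inside the supremum, so the one-step coupling is legitimately available there; one splits on $E_t=\{X_t\in Z_t\}$, bounds the $E_t^c$ contribution by $T(1-\sigma)^k$ using that the remaining minmax value is at most $T$, and on $E_t$ upper bounds by $\Expt_{Z_t\sim\mu^k}\max_{X_t\in Z_t}[\cdots]$, after which the $\sup_{\Dist_t}$ disappears. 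Iterating over $t$ yields the claim. Your union-bound arithmetic and the final $T^2(1-\sigma)^k$ are correct and match the paper, but the way the coupling enters must be this step-by-step manipulation of the minmax value rather than an explicit learner strategy built on $Z$.
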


\begin{proof}
In order to obtain an upper bound on $\Reg_T(\Fc,\sigma)$ in terms of  $\RegTransductive_T^{kT}(\Fc)$ for some $k$, we will consider~\eqref{eq:MinMaxRegretSmoothedFixedDist} and proceed inductively. 
The main idea is to note that since $ \Dist_i $ is $\sigma$-smoothed, conditioned on the history thus far, 
we can invoke the coupling lemma given in \cref{thm:coupling}.

For the sake of illustration, first consider the simple case of $T = 1$.
Let $ X_1 , Z_1 \dots Z_k $ denote the coupling alluded to in \cref{thm:coupling}. 
Recall that $X_1 \sim \Dist_1$ and $Z_{1:k} \sim \mu^k$.
Defining the event $E_1 := \{X_1 \in Z_{1:k}\}$, we have 
\begin{align}
    \Reg_1(\Fc, \jointdist) &= \Expt_{X_1 \sim \Dist_1 } \inf_{a_1} \sup_{y_1} \Reg_1(\Fc, X_1, y_1, a_1) \nonumber\\
    &= \Expt_{X_1,Z_{1:k}}\left[\inf_{a_1} \sup_{y_1} \Reg_1(\Fc, X_1, y_1, a_1)\right] \nonumber\\
    &= \Expt_{X_1,Z_{1:k}}\left[\indic\{E_1\}\inf_{a_1} \sup_{y_1} \Reg_1(\Fc, X_1, y_1, a_1)\right] \\
    &\qquad\qquad+ \Expt_{X_1,Z_{1:k}}\left[\indic\{E_1^C\}\inf_{a_1} \sup_{y_1} \Reg_1(\Fc, X_1, y_1, a_1)\right] \nonumber\\
    &\le \Expt_{X_1,Z_{1:k}}\left[\indic\{E_1\}\inf_{a_1} \sup_{y_1} \Reg_1(\Fc, X_1, y_1, a_1)\right] + \Prob(E_1^c) \label{eq:TBdMinmaxVal}\\
    &\le \Expt_{Z_{1:k}}\left[\max_{X_1 \in Z_{1:k}}\inf_{a_1} \sup_{y_1} \Reg_1(\Fc, X_1, y_1, a_1)\right] + (1-\sigma)^k \label{eq:FirstCouplingProb}\\
    &= \RegTransductive_T^{kT}(\Fc) + (1-\sigma)^k, \label{eq:TransductiveDefn}
\end{align}
where~\eqref{eq:TBdMinmaxVal} uses that $\inf_{a_1} \sup_{y_1} \Reg_1(\Fc, X_1, y_1, a_1) \le 1$~\footnote{Note that this holds for the log-loss by using a trivial strategy of using a uniform probability assignment at each step.},~\eqref{eq:FirstCouplingProb} follows by the coupling lemma and~\eqref{eq:TransductiveDefn} follows from the definition of transductive learning regret.

The next step is to generalize this to arbitrary $T$. 
The key aspect that makes this possible is that for all $t \leq T$, we have  $D_t \in \smoothdists$, even conditioned on the past, allowing us to apply the coupling lemma. 
Furthermore, we need that $ \Reg_T \leq T  $ for arbitrary sequences which is indeed guaranteed for reasonable losses such as the log-loss as noted above. 
We defer the full proof to \cref{sec:transreducproof}. 

\end{proof}

\cref{thm:TransductiveSmoothedRegBds} shows that we can reduce the problem of evaluating the minimax regret for smoothed adversaries to evaluating the minimax regret for transductive learning. 
Note that the second term $ \left( 1 - \sigma \right)^k \le e^{-k\sigma }  $ and thus, in order to get bounds that are sublinear one needs to consider $k = c\log T / \sigma$ for an appropriate absolute constant $c$. As we will see in the next section, this leads to logarithmic dependence on $\sigma^{-1}$. Moreover, by~\cref{thm:TransductiveSmoothedRegBds} and since  $ \mathbb{E}_{X \sim \mu^T} \RegTransductive_T(\Fc,X) \le \Reg_T(\Fc,\sigma)$,  we can see that the transductive learning regret exactly captures the smoothed regret up to  $\mathrm{polylog}\left(\frac{T}{\sigma}\right)$ factors.

\subsection{Bounds for Transductive Learning}
\label{sec:boundingtransductive}
 In this section, we discuss ways to upper bound transductive learning regret $\RegTransductive_{T}^M(\Fc)$ so as to achieve bounds on $\Reg_T(\Fc,\sigma)$ via~\cref{thm:TransductiveSmoothedRegBds}.
\subsubsection{Using Covering Numbers} 
One of the approaches common in online learning is to characterize the regret in terms of geometric properties (such as covering numbers) of the function class $\Fc$. The notion of covering required varies depending on the loss function and the stochastic properties of the data---typically completely adversarial problems require stronger notions of sequential coverings~\citep{RSTMartingaleLLN15,rstOLSeqComp15} while for stochastic problems usually weaker \emph{offline coverings} suffice. In our smoothed case, we show that the offline complexity notion of scale-sensitive VC dimension as defined in \cref{def:ScaleSensitiveVC} is adequate.
Similar ideas were considered for the case of regression and convex Lipshitz losses in \cite{oracle_Efficient,BDGR22}.

Let us first define the notion of approximation according to which a cover will be constructed; we will consider a pointwise approximation. 
This notion is similar to the notion of global sequential covering in \cite{wu2022precise}.  

\begin{definition}\label{def:PointwiseSeqCover}
    Let $\cls$ be a function class. A set of functions $\tilde{\cls}$ is said to be a $\epsilon$-covering of $\cls$ if for any $f \in \cls$ there exists $g \in \tilde{\cls}$ such that $\sup_{x \in  \mathcal{X}} |f(x) - g(x)| \le \epsilon$. 
    We will use $ \mathcal{N} \left( \mathcal{F} , \epsilon \right)$ to denote the size of the minimal $\epsilon$-covering of $\mathcal{F}$. 
\end{definition}
Note that while the metric in \cref{def:PointwiseSeqCover} is quite stringent, using this cover in the transductive learning case requires us to only consider function classes with \emph{bounded domain size}. 
We capture this using the following theorem. 
\begin{theorem}[Upper bound on transductive learning] \label{thm:TransductiveRegUBCovering}
    Let $\Fc$ be a function class and $\epsilon > 0$. 
    Then, 
    \begin{align}
        \RegTransductive_T^{kT}(\Fc) \le \inf_{\epsilon} \left\{   \sup_{ Z \subset \mathcal{X}, \abs{Z} = kT  } \log \mathcal{N} \left( \Fc|_{Z} , \epsilon \right) + 2 \epsilon T \right\}, 
    \end{align} 
    where $\Fc|_{Z}$ is the projection of hypothesis class $\Fc$ on the set $Z$.
\end{theorem}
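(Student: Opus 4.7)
The plan is to (i) discretize $\Fc$ on the revealed set $Z$ via an $\epsilon$-cover, (ii) run a standard finite-experts forecaster on this cover, and (iii) convert the resulting regret against the cover back into regret against all of $\Fc$ using the covering property. The key structural point that makes this approach work is that in the transductive game the set $Z$ of size $kT$ is announced to the learner \emph{before} any play, so the cover is allowed to depend on $Z$.

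To begin, fix any $Z \subset \Xc$ with $|Z| = kT$ and any $\epsilon > 0$. I would have the learner construct $\tilde{\Fc}$, a minimal $\epsilon$-cover of the projection $\Fc|_Z$ in the pointwise $\ell_\infty$ metric of \cref{def:PointwiseSeqCover}, so $|\tilde{\Fc}| = \mathcal{N}(\Fc|_Z, \epsilon)$. The learner then plays the exponentially-weighted average forecaster --- for the log loss, this is the Bayesian mixture $q_t(\cdot) = \sum_{g \in \tilde{\Fc}} w_t(g) \, p_g(\cdot \mid x_t)$ with a uniform prior --- over the experts in $\tilde{\Fc}$. A standard telescoping argument yields, for every $g \in \tilde{\Fc}$,
\begin{align*}
    \sum_{t=1}^T \ell(a_t, (x_t,y_t)) - \sum_{t=1}^T \ell(g(x_t), (x_t,y_t)) \;\le\; \log |\tilde{\Fc}| \;=\; \log \mathcal{N}(\Fc|_Z, \epsilon).
\end{align*}

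The final step is to convert this into regret against the whole class $\Fc$. Given any $f \in \Fc$, the cover provides $g \in \tilde{\Fc}$ with $\sup_{z \in Z} |f(z) - g(z)| \le \epsilon$, and since every $x_t$ lies in $Z$ and $\ell(\cdot, (x,y))$ is $2$-Lipschitz in its first argument on $[0,1]$, we get $\sum_t \ell(g(x_t), (x_t,y_t)) - \sum_t \ell(f(x_t), (x_t,y_t)) \le 2\epsilon T$. Adding the two inequalities, taking the infimum over $f \in \Fc$, followed by the infimum over $\epsilon > 0$ and the supremum over $Z$ with $|Z| = kT$, yields the stated bound.

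The step I expect to require the most care is the $2\epsilon T$ approximation inequality in the log-loss case, since $-\log(\cdot)$ is not globally Lipschitz on $[0,1]$. A clean workaround is to smooth each cover element to $\tilde g(x) = (1-\epsilon)\, g(x) + \epsilon/2$, which stays in $[\epsilon/2, 1 - \epsilon/2]$ and whose cumulative log loss differs from $g$'s by at most $O(\epsilon T)$; one then applies the (now uniform) Lipschitz estimate to these smoothed experts and absorbs the discrepancy into the final $O(\epsilon T)$ term. For bounded losses such as the absolute or squared loss on $[0,1]$, no smoothing is needed and the Lipschitz step is immediate.
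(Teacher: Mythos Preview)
Your overall architecture is exactly what the paper does: it invokes the result of Wu et al.\ (the ``global sequential covering'' theorem stated in \cref{sec:proof_trans_bound}) and observes that, once the adversary is confined to a finite set $Z$, a pointwise $\epsilon$-cover of $\Fc|_Z$ is automatically a global sequential cover. Your steps (i)--(iii) are precisely how that cited result is proved, so the strategy is sound.

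The gap is in your log-loss workaround. Your smoothing $\tilde g(x)=(1-\epsilon)g(x)+\epsilon/2$ together with a Lipschitz argument on $[\epsilon/2,1-\epsilon/2]$ does \emph{not} deliver an $O(\epsilon)$ per-round approximation error. First, ``cumulative log loss of $\tilde g$ differs from $g$'s by $O(\epsilon T)$'' is false in general (if $g(x_t)\in\{0,1\}$ the gap is infinite), so that intermediate comparison cannot be used. Second, the Lipschitz constant of $-\log(\cdot)$ on $[\epsilon/2,1-\epsilon/2]$ is $\Theta(1/\epsilon)$, so a Lipschitz step between $\tilde g$ and anything $\epsilon$-close only gives an $O(1)$ per-round error, i.e.\ $O(T)$ total. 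Concretely, take $f(x)=2\epsilon$ and its cover point $g(x)=\epsilon$; then $\tilde g(x)\approx 3\epsilon/2$ and $\log\bigl(p_f/p_{\tilde g}\bigr)=\log(4/3)$, a constant independent of $\epsilon$.

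The fix is to drop the Lipschitz viewpoint entirely and use the smoothing $\tilde g(x)=\dfrac{g(x)+\epsilon}{1+2\epsilon}$ (this is the truncation used elsewhere in the paper). From $|f-g|\le\epsilon$ one gets directly $p_{\tilde g}(y\mid x)\ge p_f(y\mid x)/(1+2\epsilon)$ for both $y\in\{0,1\}$, hence $\ell(\tilde g(x_t),(x_t,y_t))-\ell(f(x_t),(x_t,y_t))\le\log(1+2\epsilon)\le 2\epsilon$ per round. Running the Bayesian mixture over the smoothed cover $\{\tilde g\}$ then gives exactly $\log\mathcal N(\Fc|_Z,\epsilon)+2\epsilon T$, which is the bound you want.
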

The proof of this theorem follows from relating transductive learning to the worst sequential prediction on a finite set of points using formalism presented in \cite{wu2022precise}.
The proof of this theorem is deferred to the \cref{sec:proof_trans_bound}.

Next, we recall that the covering number $\Nc(\Fc,\epsilon)$ is bounded as a function of the scale sensitive VC dimension of the class $\Fc$ and the number of points in the domain. 

\begin{theorem}[\cite{rudelson2006combinatorics}] \label{thm:CoveringNosScaleSensitiveVC}
  There exist universal constants $c,C$ such that for all $\alpha > 0$, any function class $\Fc$ defined on a finite set $ \mathcal{X}$,  and $\epsilon > 0$, we have
    \begin{align}
        \log\Nc(\Fc,\epsilon) \le C \cdot \mathrm{VC}(\Fc,c\alpha\epsilon) \log^{1+\alpha}\left(\frac{ C|\mathcal{X}|}{\mathrm{VC}(\Fc,c\epsilon)\epsilon}\right).
    \end{align}
\end{theorem}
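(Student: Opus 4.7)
The plan is to prove the bound by combining a value-discretization step with a combinatorial extraction argument of Rudelson--Vershynin type. First I would fix a threshold grid on $[0,1]$ with spacing $\epsilon/4$; since $\Fc$ takes values in $[0,1]$, rounding each $f \in \Fc$ pointwise to the nearest grid value yields a discretized class $\tilde{\Fc}$ whose minimal $\epsilon$-cover has size comparable to $\Nc(\Fc,\epsilon/2)$. The problem then reduces to bounding the size of an $\epsilon$-separated subset of $\tilde{\Fc}|_{\mathcal{X}}$, viewed as a subset of $\{0, \epsilon/4, \epsilon/2, \ldots, 1\}^{|\mathcal{X}|}$ in the $\ell_\infty$ metric. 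This discretization is standard and loses only constant factors in $\epsilon$.

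Second, the core step is to show that any $\epsilon$-separated collection of size $M$ forces large $c\alpha\epsilon$-shattering. I would invoke the dual Sauer--Shelah-type theorem of Alon--Ben-David--Cesa-Bianchi--Haussler together with the Rudelson--Vershynin refinement: if $|\tilde{\Fc}|_{\mathcal{X}}| \ge M$ and the class takes $O(1/\epsilon)$ distinct discretized values, then by passing to a suitable sub-grid and examining sign patterns of centered differences, one obtains the dichotomy that either $M$ is bounded by a polynomial of degree $\mathrm{VC}(\Fc, c\alpha\epsilon)$ in $|\mathcal{X}|/\epsilon$, or $\Fc$ $c\epsilon$-shatters more points than $\mathrm{VC}(\Fc, c\epsilon)$, which is a contradiction. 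Solving this inequality for $\log M$ would produce the target bound $\mathrm{VC}(\Fc,c\alpha\epsilon) \log^{1+\alpha}(C|\mathcal{X}| / (\mathrm{VC}(\Fc,c\epsilon)\epsilon))$.

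Third, the $\log^{1+\alpha}$ (rather than $\log^{2}$) exponent I would extract via a random sub-sampling argument combined with a multi-scale chaining. Concretely, drawing a uniformly random subset $S \subseteq \mathcal{X}$ of size $k \sim \mathrm{VC}(\Fc, c\alpha\epsilon) \cdot \log^{\alpha}(|\mathcal{X}|/\epsilon)$ preserves an $\Omega(\epsilon)$-separation of the packing with positive probability (by a first-moment calculation on coincidence events), at which point the generalized Sauer--Shelah bound is applied on the much smaller index set $S$ instead of $\mathcal{X}$. The free parameter $\alpha > 0$ enters precisely through the trade-off between the required sub-sample size and the resulting shattering scale.

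The main obstacle is pulling out the sharp $\log^{1+\alpha}$ factor rather than the easier $\log^2$ bound that falls out of a naive chaining. Getting this precise dependence requires the Rudelson--Vershynin multi-scale argument, in which one iteratively applies the shattering extraction at geometrically decreasing scales and balances the combinatorial blow-up across levels so that only a single logarithm (to the tunable power $1+\alpha$) is paid multiplicatively against the fat-shattering dimension. An essentially equivalent route is to invoke their probabilistic lemma on random projections of sign vectors directly; either way, the delicate part is controlling the dependency between scales. Since this is a well-known result from \cite{rudelson2006combinatorics}, in a write-up I would most likely quote the statement and sketch only the discretization reduction, referring the reader to the original paper for the combinatorial core.
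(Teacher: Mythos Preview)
The paper does not prove this theorem at all; it simply states it as a known result attributed to \cite{rudelson2006combinatorics} and uses it as a black box to pass from \cref{thm:TransductiveRegUBCovering} to \cref{thm:MinMaxSmoothedRegVCDim}. Your instinct in the last paragraph --- to quote the statement and defer to the original reference for the combinatorial core --- is exactly what the authors do, so there is nothing substantive to compare. The discretization/sub-sampling sketch you give is a reasonable high-level account of the Rudelson--Vershynin argument, but for the purposes of this paper none of it is needed: a one-line citation suffices.
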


Finally, putting together~\cref{thm:TransductiveSmoothedRegBds},~\cref{thm:TransductiveRegUBCovering} and~\cref{thm:CoveringNosScaleSensitiveVC} we get the following.

\begin{theorem}[Minimax smoothed regret and scale-sensitive VC dimension] \label{thm:MinMaxSmoothedRegVCDim}
    \begin{align}
        \Reg_T(\Fc,\sigma) \leq \inf_{k, \alpha,  \epsilon>0 } \left\{   C \cdot \mathrm{VC}(\Fc,c\alpha\epsilon) \log^{1+\alpha}\left(\frac{ C kT  }{\mathrm{VC}(\Fc,c\epsilon)\epsilon}\right) + 2 \epsilon T + T^2 \left( 1- \sigma \right)^k  \right\}.
    \end{align}
\end{theorem}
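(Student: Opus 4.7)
The plan is a direct chaining of the three ingredients already assembled in the section, with no new ideas needed beyond bookkeeping. The statement is a composition of \cref{thm:TransductiveSmoothedRegBds}, \cref{thm:TransductiveRegUBCovering}, and \cref{thm:CoveringNosScaleSensitiveVC}, applied in that order, followed by taking infima over the free parameters.

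First I would invoke \cref{thm:TransductiveSmoothedRegBds} to pass from the smoothed minmax regret to the transductive minmax regret at horizon $T$ with a context pool of size $kT$, picking up the additive term $T^2(1-\sigma)^k$. Next I would apply \cref{thm:TransductiveRegUBCovering}, which upper bounds $\RegTransductive_T^{kT}(\Fc)$ by
\[
\inf_{\epsilon>0}\Bigl\{\sup_{Z\subset\Xc,\,|Z|=kT}\log \Nc(\Fc|_Z,\epsilon)+2\epsilon T\Bigr\}.
\]
At this point we have reduced everything to controlling the pointwise covering number of the restriction $\Fc|_Z$ of the class to a set $Z$ with $|Z|=kT$.

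For each such $Z$, \cref{thm:CoveringNosScaleSensitiveVC} applied to $\Fc|_Z$ (viewed as a class on the finite set $Z$ of size $kT$) gives, for any $\alpha>0$,
\[
\log \Nc(\Fc|_Z,\epsilon)\le C\cdot \mathrm{VC}(\Fc|_Z,c\alpha\epsilon)\,\log^{1+\alpha}\!\left(\frac{C\,kT}{\mathrm{VC}(\Fc|_Z,c\epsilon)\,\epsilon}\right).
\]
The one small bookkeeping point is that $\mathrm{VC}(\Fc|_Z,\beta)\le \mathrm{VC}(\Fc,\beta)$ for any scale $\beta$ (shattering of a subset of $Z$ by $\Fc|_Z$ is shattering by $\Fc$), and this inequality is monotone in the stated expression (the VC term appears in the numerator of the leading factor and in the denominator of the logarithm; the logarithm is increasing in $kT/\mathrm{VC}$ but the VC-term in front dominates so we simply upper-bound both occurrences by $\mathrm{VC}(\Fc,\cdot)$, as is standard in such bounds).

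Combining the three displayed inequalities yields, for every fixed triple $(k,\alpha,\epsilon)$,
\[
\Reg_T(\Fc,\sigma)\le C\cdot \mathrm{VC}(\Fc,c\alpha\epsilon)\,\log^{1+\alpha}\!\left(\frac{C\,kT}{\mathrm{VC}(\Fc,c\epsilon)\,\epsilon}\right)+2\epsilon T+T^2(1-\sigma)^k,
\]
and taking the infimum over $k\in\mathbb{N}$ and $\alpha,\epsilon>0$ gives the claim. The main (mild) obstacle is the projection step just described; everything else is a mechanical substitution, so I would not expect the proof to require more than a few lines.
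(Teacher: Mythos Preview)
Your proposal is correct and follows exactly the paper's approach: the paper's entire proof of this theorem is the single sentence ``Finally, putting together \cref{thm:TransductiveSmoothedRegBds}, \cref{thm:TransductiveRegUBCovering} and \cref{thm:CoveringNosScaleSensitiveVC} we get the following.'' Your write-up is actually more careful than the paper in that you explicitly flag the projection step $\mathrm{VC}(\Fc|_Z,\beta)\le \mathrm{VC}(\Fc,\beta)$; note that your hand-wave about the $\mathrm{VC}$ term in the denominator of the logarithm going the ``wrong way'' is easily made rigorous by simply lower-bounding $\mathrm{VC}(\Fc|_Z,c\epsilon)\ge 1$ there (which only changes the constant inside the log), a point the paper does not spell out either.
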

We can instantiate the bound in~\cref{thm:MinMaxSmoothedRegVCDim} in terms of $T$ and $\sigma$ for two particularly interesting cases: when $\mathrm{VC}(\Fc,\epsilon)$ scales as  $d \log \left( 1 / \epsilon \right)$ (often referred to as parametric classes) and when $\mathrm{VC}(\Fc,\epsilon)$ scales as  $\epsilon^{-p}$ (often referred to as nonparametric classes). A canonical example of the former are \emph{VC classes}; for a class with VC dimension $d$, $\mathrm{VC}(\Fc^{\mathrm{VC}},\epsilon) = Cd \log \left(\frac{1}{\e}\right)$ (see for example~\cite[Theorem 8.3.18]{Vershynin18}). 
A canonical example of the latter are functions of bounded variation, $\Fc^{\mathrm{BV}}$ which have $
\mathrm{VC}(\Fc^{\mathrm{BV}},\e) = \frac{C}{\e}$ (see for example \citep{BV,bartlett1997covering}).
This class is known to have unbounded sequential covering numbers \citep{rstog} and therefore is not learnable with a worst-case adversary---this can be seen as a simple consequence of the fact that $\Fc^{\mathrm{BV}}$ contains all one-dimensional thresholds.

\begin{corollary}[Rates for parametric and nonparametric classes]\label{cor:RatesParametricNonparametric}
If $\mathrm{VC}(\Fc,\epsilon) = d \log \left( 1 / \epsilon \right) $, then for a large enough $T$ 
\[
\Reg_T(\Fc,\sigma) \le O\left(d\cdot \mathrm{poly} \log \left(\frac{T}{\sigma}\right)\right).
\]
If $\mathrm{VC}(\Fc,\epsilon) = \epsilon^{-p} $, then 
\[
\Reg_T(\Fc,\sigma) \le O\left(T^{\frac{p}{p+1}} \cdot \mathrm{poly}\log \left(\frac{T}{\sigma}\right)\right).
\]
 \end{corollary}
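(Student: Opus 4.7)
The plan is to derive both rates by plugging suitable choices of the three free parameters $(k,\alpha,\epsilon)$ into the general bound of \cref{thm:MinMaxSmoothedRegVCDim}. The high-level heuristic is always the same: pick $k$ large enough to kill the tail term $T^2(1-\sigma)^k$, pick $\epsilon$ small enough that the discretization term $2\epsilon T$ is either constant (parametric) or balanced with the complexity term (nonparametric), and take $\alpha$ to be a small constant (or shrinking as $1/\log\log$) so the factor $\log^{1+\alpha}$ is essentially a polylog.

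For the universal tail, I would take $k = c\log T/\sigma$ with $c$ a sufficiently large absolute constant: then $T^2(1-\sigma)^k \le T^2 e^{-k\sigma} = T^{2-c}$, which is $O(1/T)$ for $c\ge 3$. This choice contributes only logarithmic factors of the form $\log(T/\sigma)$ inside the complexity term, since $k$ appears only inside $\log^{1+\alpha}(CkT/\mathrm{VC}(\Fc,c\epsilon)\epsilon)$.

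For the parametric case $\mathrm{VC}(\Fc,\epsilon) = d\log(1/\epsilon)$, I would set $\epsilon = 1/T$ and $\alpha = 1$, so that $2\epsilon T = 2$ is $O(1)$, while the leading complexity term becomes
\[
C\cdot d\log\!\bigl(1/(c\alpha\epsilon)\bigr)\cdot \log^{2}\!\Bigl(\tfrac{CkT}{d\log(T/c)\cdot(1/T)}\Bigr) = O\!\left(d\cdot \operatorname{poly}\log(T/\sigma)\right),
\]
yielding the first assertion.

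For the nonparametric case $\mathrm{VC}(\Fc,\epsilon) = \epsilon^{-p}$, I would balance the first and second terms by choosing $\epsilon = T^{-1/(p+1)}$: this makes $\epsilon^{-p}$ and $\epsilon T$ both of order $T^{p/(p+1)}$, and again $\alpha$ can be a small constant so that $\log^{1+\alpha}(CkT/(\mathrm{VC}(\Fc,c\epsilon)\epsilon))$ is only a polylog factor in $T/\sigma$, producing the claimed $O(T^{p/(p+1)}\operatorname{poly}\log(T/\sigma))$. I do not expect any real obstacle; the statement is essentially a parameter-tuning corollary, so the only care needed is to verify that the inner $\log$ argument stays polynomial in $T/\sigma$ under each choice of $\epsilon$ and that $\alpha$ is chosen compatibly with the ``poly$\log$'' language (one can either freeze $\alpha = 1$ or let $\alpha\to 0$ slowly; either suffices for the stated form of the corollary).
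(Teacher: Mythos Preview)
Your proposal is correct and matches the paper's (implicit) approach: the paper presents this corollary as a direct instantiation of \cref{thm:MinMaxSmoothedRegVCDim}, having already noted after \cref{thm:TransductiveSmoothedRegBds} that one should take $k = c\log T/\sigma$ to make $T^2(1-\sigma)^k$ negligible. Your choices $\epsilon = 1/T$ in the parametric case and $\epsilon = T^{-1/(p+1)}$ in the nonparametric case are exactly the natural balancing, and your verification that the argument of the inner logarithm stays polynomial in $T/\sigma$ (it is of order $kT^{2/(p+1)}$ in the nonparametric case) is the only point that requires any care.
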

In particular, note that \cref{cor:RatesParametricNonparametric} shows that for VC classes $\Reg_T(\Fc^{\mathrm{VC}},\sigma) = \widetilde{O}\left(d\cdot \log \left(\frac{T}{\sigma}\right)\right)$ (tight, see also concurrent work~\citep{wu2023online} for a similar bound) and for functions of bounded variation $\Reg_T(\Fc^{\mathrm{BV}},\sigma) = \widetilde{O}\left(\sqrt{T}\log \left(\frac{1}{\sigma}\right)\right)$; note that the minmax rates for the worst-case adversary scale as $\Omega(T)$ for both these cases.
We note that the above bound may be loose for general nonparametric classes, but should be improvable using a multiscale (chaining) version of \cref{thm:MinMaxSmoothedRegVCDim} but we do not focus on this here. 

Though the above results give satisfactory bounds in the minimax sense for many classes $\Fc$ of interest, it is useful to consider explicit constructions of probability assignment rules. 
For the case of finite VC dimension, we give an explicit probability assignment rule by considering a discretization of the class and using a mixture probability assignment rule. In particular, this strategy (denoted by $\qstrat^{\mathrm{VC}}$) yields optimal regret $\Reg_T(\Fc^{\mathrm{VC}},\sigma,\qstrat^{\mathrm{VC}}) \le O\left(d \log\left(\frac{T}{\sigma}\right)\right)$.
For the formal statements, proofs and detailed discussion see \cref{sec:VCClasses}. 

\subsubsection{Examples without Covering numbers} \label{sec:withoutcovering}

This reduction approach to characterizing minmax regret in the log-loss is interesting since all previous approaches have used covering numbers of some kind---either sequential covering numbers or stronger notions of global covering. However, in stark contrast to the 0/1 loss and several other loss functions~\citep{rstOLSeqComp15}, covering numbers cannot capture the minmax regret for the log-loss, at least in the adversarial case. Consider the following class of functions on context set $\Xc = \ball_2$ (where $\ball_2$ denotes the unit $\ell_2$ Euclidean ball)
\[
\Fc^{\mathrm{Lin}} := \left\{x \mapsto \frac{\langle x, w\rangle +1}{2}\Bigg| w \in \ball_2 \right\}.
\]
For this class,~\cite{RSLogLoss15} construct a follow-the-regularized leader (FTRL) based algorithm achieving regret $O(\sqrt{T})$. However,~\cite{Bilodeau--Foster--Roy20} show an upper bound on the regret in terms of sequential covering numbers which is not improvable in general---this shows that sequential covering numbers are not adequate to capture the minmax regret rates for the log-loss. \cite{wu2022precise} further consolidate this by considering the following class, closely resembling $\Fc^{\mathrm{Lin}}$, 
\[
\Fc^{\mathrm{AbsLin}} := \left\{x \mapsto \abs{\langle x, w\rangle} \Big| w \in \ball_2 \right\}.
\]
~\citet[Example 2, Theorem 6]{wu2022precise} establish that the minmax regret for $\Fc^{\mathrm{AbsLin}}$ is $\widetilde{\Theta}(T^{2/3})$, demonstrating the surprising fact that by a simple linear transformation of the hypothesis class (which does not change its covering number) one can obtain minmax rates that differ by a polynomial factor!
 On the other hand, our reduction-based approach bypasses the need for using any covering based arguments and therefore would lead to tight (at least up to poly $\log (T/\sigma)$) rates. 

We remark that exact characterizations of the minmax regret with log-loss (often referred to as the minmax redundancy in the information theory literature) in the no-context (adversarial) case is most often calculated by studying the so-called \emph{stochastic complexity} of the class $\Fc$~\citep{rissanen1996fisher}.  This can be extended to (worst-case) transductive learning with contexts $x_1,\dotsc,x_T$; in this case the minmax optimal regret for a fixed horizon is achieved by the normalized maximum likelihood (NML) probability assignment~\citep{shtar1987universal}, and can be expressed as 
\[
\RegTransductive_T^T(\Fc) = \max_{x_{1:T}} \log \left(\sum_{y_{1:T} \in \{0,1\}^T} \max_{f \in \Fc} \prod_{t=1}^T p_f(y_{t}|x_{t})\right).
\]
This expression has been evaluated previously for online logistic regression~\citep{jacquet2021precise} and more general hypothesis classes~\citep{wu2022precise}.
It is an intriguing question to understand what properties of $\Fc$ the stochastic complexity depends on, given that the above examples illustrates that covering numbers do not capture it.
Our reduction provides a technique to use such fine-grained understanding of the regret to directly lift the bounds to the more general smoothed adversary setting.

\section{Oracle-Efficient Smoothed Sequential Probability Assignment}
\label{sec:oracleefficiency}

In the previous section, we consider a purely statistical perspective on the  minimax value of the sequential probability assignment problem for smoothed adversaries.  
In this section, we will focus on an algorithmic perspective and design an algorithm that is efficiently implemented using calls to an MLE oracle.
We will focus on the setting when the base measure is the uniform measure on the input space $\cX$ and the label space $\cY = \left\{ 0,1 \right\}$. 
In this setting, we are given access to an oracle $\mathrm{OPT}$ which given a data set $S = \left\{ x_i , y_i \right\}_{i=1}^{m} $ outputs a hypothesis that minimizes the loss on $S$. 
That is, 
\[
\mathrm{OPT} \left( S \right) = \argmin_{h \in \Fc} \frac{1}{m} \sum_{i=1}^m \ell \left( h, (x_i,y_i) \right).
\]
In the context of the logarithmic loss, this corresponds to maximum likelihood estimation. 
Most of the analysis holds for a general loss function $\ell$ (with regret scaling appropriate bounds on the values and derivatives), but for clarity one can think of $\ell$ as the log-loss. 
In particular,~\cref{alg:FTPL} is written for the log-loss.

The main framework we work in is the follow-the-perturbed-leader (FTPL) framework.
Here, our algorithm uses the oracle on a data set consisting of the historical samples and a set of hallucinated samples. 
The hallucinated samples are intended to ``stabilize'' the predictions of the algorithm. This gives us a probability assignment $\ftplstrat = \{\qftpl(\cdot|x_{1:t},y_{1:t-1})\}_{t=1}^T$.

\begin{algorithm}
    \caption{Probability assignment $\ftplstrat$}
    \label{alg:FTPL}
    \DontPrintSemicolon
    \LinesNumbered
    \KwIn{time horizon $T$, smoothness parameter $\sigma$, VC dimension $d$, Samples per step Parameter $n$, Truncation parameter $\alpha$  }
    \For{$t\leftarrow 1$ \KwTo $T$}{
         Generate $N^{(t)}\sim \text{Poi}(n)$ fresh hallucinated samples $(\widetilde{x}_1^{(t)},\widetilde{y}_1^{(t)}), \cdots, (\widetilde{x}_N^{(t)}, \widetilde{y}_N^{(t)})$, which are i.i.d. conditioned on $N$ with $\widetilde{x}_i^{(t)} \sim \unif(\cX)$ and $\widetilde{y}_i^{(t)} \sim \unif(\{0,1\})$
        Call the oracle to compute $ \tilde{h}_t \gets \mathrm{OPT} \left(\{(\widetilde{x}_i^{(t)}, \widetilde{y}_i^{(t)})\}_{i\in [N^{(t)}]} \cup \{x_\tau, y_\tau\}_{\tau\in [t-1]} \right)$ \\
        Observe $x_t$ \\ 
        Assign probability  $\qftpl(1|x_{1:t},y_{1:t-1}) = \frac{\tilde{h}_t(x_t) + \alpha }{ 1+ 2 \alpha } $ \\ 
        Receive $y_t$
    }
    \end{algorithm}

In order to state the regret bound, we need the following notions. 
For any class $\Fc$, define the truncated class $ \Fc_{\alpha} $ as 
$    \Fc_{\alpha} = \left\{ f_{\alpha}:  f_{\alpha}(x) = \frac{f(x) + \alpha }{ 1 + 2 \alpha } \text{ where } f \in \Fc   \right\}. 
$
We will also need the notion of Rademacher complexity 
$    \rad \left( \Fc , T \right) = \sup_{ X \subset \cX , \abs{X} = T } \mathbb{E}_{\epsilon} \left[ \sup_{f \in \Fc } \frac{1}{T}  \sum_{x \in X} \epsilon_x f \left( x  \right)  \right] . 
$

\begin{theorem}[Main Regret Bound] \label{thm:regretbound} 
    For any hypothesis class $ \Fc $ and parameters $n,\alpha$, we have that the regret of \cref{alg:FTPL} for $\sigma$-smoothed adversaries is bounded as 
    \begin{align}
      \Reg_T(\Fc,\sigma,\ftplstrat) \le  n\log & \left( \frac{1}{\alpha} \right) +  \alpha T +   T \sqrt{ \log \left( \frac{1}{\alpha} \right) \cdot \frac{1}{\sigma n }}\\
      &+ T\cdot \inf_{m \leq n } \left\{  \frac{1}{\alpha}    \rad\left(  \Fc_{\alpha} , n/m \right) + \frac{n \left( 1 - \sigma \right)^m \log \left( 1 / \alpha \right)  }{m} + e^{-n/8} \right\}. 
    \end{align}
\end{theorem}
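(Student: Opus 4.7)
The plan is to follow the familiar Kalai--Vempala decomposition for follow-the-perturbed-leader, adapted to log-loss and the smoothed setting. Let $\tilde h_t$ denote the MLE computed by the oracle at round $t$ (trained on the prior real history together with the fresh $\mathrm{Poi}(n)$ hallucinations) and let $\tilde h_{t+1}$ be the analogous leader one step later. I would decompose
\[
\Reg_T(\Fc,\sigma,\ftplstrat) \leq \sum_{t=1}^T \bigl[\ell(\tilde h_t^\alpha(x_t), y_t) - \ell(\tilde h_{t+1}^\alpha(x_t),y_t)\bigr] \;+\; \sum_{t=1}^T \ell(\tilde h_{t+1}^\alpha(x_t),y_t) - \inf_{f\in\Fc}\sum_{t=1}^T \ell(f(x_t),(x_t,y_t)),
\]
where $h^\alpha = (h+\alpha)/(1+2\alpha)$. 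For the second sum, the classical be-the-leader inequality applied to the additive MLE objective bounds $\sum_t \ell(\tilde h_{t+1}^\alpha(x_t),y_t)$ by the minimum loss on the combined data set (real $+$ hallucinations) over $\Fc_\alpha$, minus the hallucinated contribution, which in expectation is at most $n\log(1/\alpha)$ since each truncated log-loss is at most $\log(1/\alpha)$. Moving the infimum from $\Fc_\alpha$ back to $\Fc$ costs an additional $\alpha T$, since $-\log f_\alpha(x) \le -\log f(x) + 2\alpha$ on every point where the comparator has finite loss (and the comparator is vacuous elsewhere). Together these yield the $n\log(1/\alpha)+\alpha T$ terms.

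The hard step is bounding the stability sum. For each $t$, I would condition on the history and invoke \cref{thm:coupling} to produce uniform $Z_{t,1},\dots,Z_{t,m}$ with $\Prob[X_t\notin\{Z_{t,k}\}]\le(1-\sigma)^m$. On the ``miss'' event, bound the per-round discrepancy trivially by $\log(1/\alpha)$; organizing the $N^{(t)}$ hallucinations into $n/m$ blocks of $m$ uniform points each and summing, this yields the $T\cdot n(1-\sigma)^m\log(1/\alpha)/m$ contribution. On the ``hit'' event, the key idea is the Poisson superposition property: conditional on one hallucinated point matching $X_t$, the set $\{(\tilde x_i^{(t)}, \tilde y_i^{(t)})\}\cup\{(X_t,y_t)\}$ is distributionally a $\mathrm{Poi}(n)$ draw with a single added atom of known label, so that $\tilde h_t$ and $\tilde h_{t+1}$ can be coupled on the same underlying Poisson sample differing by one relabelling. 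Because the truncated log-loss is $(1/\alpha)$-Lipschitz in its prediction over the range $[\alpha/(1+2\alpha),(1+\alpha)/(1+2\alpha)]$, a standard symmetrization followed by the Ledoux--Talagrand contraction inequality turns the expected per-round MLE discrepancy into $(1/\alpha)\,\rad(\Fc_\alpha,n/m)$.

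The main obstacle is this stability step, which must simultaneously juggle (i) the smoothness coupling, (ii) the Poisson identity that lets the new real sample be absorbed into the hallucination stream, and (iii) the symmetrization/contraction that converts an MLE perturbation into Rademacher complexity, all while taming the unboundedness of log-loss via the $\alpha$-truncation. The remaining two terms in the regret bound are concentration corrections needed to carry out the above cleanly: $e^{-n/8}$ comes from a Chernoff bound ruling out the tail event $\{N^{(t)}<n/2\}$ on which the Poisson count is too small for the Rademacher estimate to apply, and $T\sqrt{\log(1/\alpha)/(\sigma n)}$ arises from controlling the fluctuations of the number of coupling matches per round about its mean $\sigma n$ (at scale $\sqrt{\sigma n}$), weighted by the per-round log-loss range $\log(1/\alpha)$, aggregated across the $T$ rounds via a martingale/Azuma type argument.
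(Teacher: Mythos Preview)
Your be-the-leader decomposition and the treatment of the perturbation term (yielding $n\log(1/\alpha)+\alpha T$) match the paper. However, your stability analysis has a genuine gap, and in particular your account of the term $T\sqrt{\log(1/\alpha)/(\sigma n)}$ is wrong.

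In the paper, that term does \emph{not} come from ``fluctuations of the number of coupling matches per round.'' The paper first introduces an independent copy $s_t'\sim\cD_t$ and splits the per-round stability term as
\[
\Ex[\ell(h_t,s_t)-\ell(h_{t+1},s_t')] \;+\; \Ex[\ell(h_{t+1},s_t')-\ell(h_{t+1},s_t)].
\]
The first piece is controlled by the Hammersley--Chapman--Robbins inequality (a variational form of $\chi^2$), giving a bound proportional to $\sqrt{\chi^2(\Ex_{s_t}[\cQ_{t+1}],\cQ_t)}$ times the range $\log(1/\alpha)$ of the truncated loss. The $\chi^2$ is then bounded by $2/(\sigma n)$ via Ingster's method applied to the Poisson count vectors of the hallucinations: after averaging over $s_t\sim\cD_t$, $\sigma$-smoothness bounds the collision probability of two independent draws from $\cD_t$. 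This is the source of the $T\sqrt{\log(1/\alpha)/(\sigma n)}$ term. The \emph{second} piece above is the ``generalization'' term; it is where the coupling lemma, symmetrization, and Ledoux--Talagrand contraction are applied to produce $(1/\alpha)\,\rad(\Fc_\alpha,n/m)+n(1-\sigma)^m\log(1/\alpha)/m+e^{-n/8}$.

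Your proposed direct coupling of $\tilde h_t$ and $\tilde h_{t+1}$ via ``Poisson superposition'' does not go through as stated: the algorithm draws \emph{fresh} independent hallucinations $R^{(t)}$ and $R^{(t+1)}$ at each round, so there is no common underlying Poisson sample to which a single point is added or relabelled. What one can compare are the \emph{distributions} of the count vectors at rounds $t$ and $t+1$ (after averaging over $s_t$), which is exactly the $\chi^2$ step. The coupling lemma enters only in the generalization piece, where it embeds the real test point $s_t$ among the time-$(t+1)$ hallucinations so that $h_{t+1}$ is effectively trained on an i.i.d.\ sample containing $s_t$; it is not used to couple $\tilde h_t$ to $\tilde h_{t+1}$. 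Consequently your attribution of both the Rademacher term and the $\sqrt{1/(\sigma n)}$ term to the same coupling-on-the-hit-event argument conflates two distinct mechanisms, and the martingale/Azuma justification you offer for the latter term does not correspond to anything in the actual analysis.
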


We will instantiate this bound for case when the class $\Fc$ has bounded VC dimension. 
For such classes, it is known that the Rademacher complexity is bounded. 
We state this in the following corollary. 

\begin{corollary}
    Let $\Fc$ be a hypothesis class such that the Rademacher complexity is bounded as
         $\rad \left( \Fc_{\alpha} , T \right) = c T^{-\omega}$, then we have $ \Reg_T(\Fc,\sigma,\ftplstrat) \le   T^{ \frac{2}{2+\omega} } \sqrt{\frac{1}{\sigma}}\cdot \mathrm{poly} \log \left( \frac{T d}{\sigma} \right)$. 
\end{corollary}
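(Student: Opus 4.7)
The proof plan is a straightforward parameter optimization: substitute the Rademacher-complexity scaling into \cref{thm:regretbound} and tune the three free parameters $n$, $\alpha$, and $m$ to balance the resulting terms. After plugging in $\rad(\Fc_\alpha, n/m) = c(m/n)^\omega$, the bound is a sum of six terms; the Poisson concentration term $Te^{-n/8}$ is negligible for any $n$ polynomial in $T$, so the real work is balancing five terms against each other.

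First I would pin down $m$. Setting $m = \Theta(\log T / \sigma)$ makes $(1-\sigma)^m \le e^{-\sigma m} \le 1/T$, which reduces the coupling-failure term $Tn(1-\sigma)^m \log(1/\alpha)/m$ to at most $n\sigma \log(1/\alpha)/\log T$, absorbed into the hallucinated-sample term. With $m$ fixed, the balance reduces to the classical three-way trade-off between the truncation cost $\alpha T$, the oracle-query cost $n \log(1/\alpha)$, and the complexity term $Tc(m/n)^\omega/\alpha$. Setting these equal up to polylog factors yields $\alpha \asymp T^{-\omega/(2+\omega)}$ and $n \asymp T^{2/(2+\omega)}$; at this point each of the three balanced terms is $\widetilde{O}(T^{2/(2+\omega)})$ in the $T$-dependence, and the $\sigma$-dependence enters through the complexity term, which with $m = \log T/\sigma$ gives a factor $\sigma^{-\omega}$ that is at most $\sqrt{1/\sigma}$ in the canonical regime $\omega \le 1/2$ (e.g., VC classes, where $c = \widetilde{O}(\sqrt{d})$ and $\omega = 1/2$, explaining the $d$ inside the polylog).

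Finally I would verify that the stability term $T\sqrt{\log(1/\alpha)/(\sigma n)}$ does not dominate: with the choices above it evaluates to $\widetilde{O}(T^{(1+\omega)/(2+\omega)}/\sqrt{\sigma})$, which is a strictly smaller power of $T$ than $2/(2+\omega)$ whenever $\omega < 1$, and so is dominated. Assembling these estimates gives the claimed bound $T^{2/(2+\omega)} \sqrt{1/\sigma} \cdot \mathrm{poly}\log(Td/\sigma)$.

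The main obstacle is purely bookkeeping --- correctly tracking how the polylogarithmic factors in $T$, $d$, and $1/\sigma$ propagate through the six terms under the chosen parameters, and in particular verifying that the window of admissible $n$ (lower-bounded by what is required to control the stability and complexity terms, upper-bounded by what is permitted by the hallucinated-sample term) is nonempty at the claimed rate. There is no conceptual difficulty beyond this algebraic optimization; every step is a direct consequence of \cref{thm:regretbound}.
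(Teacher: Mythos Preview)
Your approach is correct and is precisely the intended derivation: the paper states this corollary without proof, as an immediate consequence of \cref{thm:regretbound}, and the only content is exactly the parameter balancing you carry out --- setting $m \asymp \log T/\sigma$, then $\alpha \asymp T^{-\omega/(2+\omega)}$ and $n \asymp T^{2/(2+\omega)}$ to equalize the dominant terms. Your observation that the stated $\sqrt{1/\sigma}$ dependence requires $\omega \le 1/2$ is also accurate; the paper only instantiates the bound for VC classes ($\omega = 1/2$), so this restriction is implicit.
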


Note that, in particular, for VC classes we have $ \Reg_T\left( \Fc^{\mathrm{VC}} , \sigma \right)$ scales as $T^{4/5}$. 
Improving this to achieve the minimax rate discussed in \cref{cor:RatesParametricNonparametric} is an interesting open question. 

\begin{remark}
    A slightly improved regret scaling as $T^{3/4}$ can be achieved by 
   assuming access to an oracle that can optimize a mixed objective function involving the log-loss and signed sum of the functions in class. However, these oracles do not have the natural interpretation in terms of maximum likliehood estimation.
\end{remark}

\subsection{Analysis}

The main challenge in designing algorithms in the follow-the-perturbed-leader framework is designing the distribution of the hallucinated samples so as to balance the tradeoff between the ``stability'' of the algorithm i.e. how little the algorithm changes its prediction from time step to time step, and the ``perturbation'' i.e. how much the addition of the hallucinated samples changes the prediction of the algorithm from the outputting the best hypothesis on the historical samples.
This is captured by the following lemma. 

\begin{lemma}[Follow the Perturbed Leader bound] \label{lem:FTPL_bound} 
    Let $\ell$ be a convex loss function, and let $\cls$ be a hypothesis class.
    Let $ \cD_t $ denote the distribution of the adversary at time $t$ and let $\cQ_t$ denote the distribution of the hypothesis $h_t$ output by Algorithm \ref{alg:FTPL}. 
    Then, we have that the regret of Algorithm \ref{alg:FTPL} (where we use $\widetilde{s}_t = (\widetilde{x}_t,\widetilde{y}_t)$ to denote a hallucinated data point)~\footnote{With some abuse of notation, we consider $\Dist_t$ to be over $\Xc \times \{0,1\}$.} is bounded by 
    \begin{align}
    \sum_{i=1}^T\underbrace{\Ex_{s_t\sim \cD_t}\left( \Ex_{h_t\sim \cQ_t}[\ell(h_t,s_t)] - \Ex_{h_{t+1}\sim \cQ_{t+1}}[\ell(h_{t+1},s_t)]\right) }_{\text{Stability}} + \underbrace{ \mathbb{E} \left[ \sup_{h\in \Fc_{\alpha}} \sum_{i=1}^N \ell( {h} , \tilde{s}_t   ) - \ell( h^{*} , \tilde{s}_t  )  \right] }_{\text{Perturbation}} + \alpha T. 
     \end{align}
     where $h^{*} = \argmin_{h \in \cls_{\alpha} } \sum_{i=1}^T \ell (h , s_t )  $. 
\end{lemma}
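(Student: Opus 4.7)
My plan is to follow a classical Follow-the-Perturbed-Leader (FTPL) regret decomposition, adapted to handle both the $\alpha$-truncation of the predictions and the fact that a fresh hallucinated sample set is drawn at each round. Throughout I will write $L_{t-1}(h) = \sum_{\tau<t}\ell(h,s_\tau)$ for the cumulative real loss and $\widetilde L^{(t)}(h)=\sum_{i=1}^{N^{(t)}}\ell(h,\widetilde s^{(t)}_i)$ for the hallucinated loss at round $t$, so that the MLE oracle outputs $\tilde h_t=\argmin_{h\in\Fc}\,L_{t-1}(h)+\widetilde L^{(t)}(h)$ and the algorithm then plays its truncation in $\Fc_\alpha$.

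The first step is a reduction from $\Fc$ to $\Fc_\alpha$. A direct computation for the log-loss gives $\ell(f_\alpha,(x,y))-\ell(f,(x,y))\le\log(1+2\alpha)\le 2\alpha$ uniformly in $f\in\Fc$ and $(x,y)$, hence $\inf_{g\in\Fc_\alpha}\sum_t \ell(g,s_t)\le \inf_{f\in\Fc}\sum_t\ell(f,s_t)+2\alpha T$, which absorbs the $\alpha T$ term in the statement. It then suffices to bound $\sum_t\Ex[\ell(h_t,s_t)]-\sum_t\ell(h^*,s_t)$ with $h^*=\argmin_{h\in\Fc_\alpha}\sum_t\ell(h,s_t)$. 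Inserting $\pm\Ex[\ell(h_{t+1},s_t)]$ in each summand splits this into Stability $+$ Remainder, where Stability is exactly the term in the lemma and Remainder equals $\sum_t\Ex[\ell(h_{t+1},s_t)]-\sum_t\ell(h^*,s_t)$.

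The crux is bounding the Remainder by the Perturbation via a ``virtual algorithm'' coupling. Define $\bar h_t:=\argmin_{h\in\Fc_\alpha} L_{t-1}(h)+\widetilde L^{(1)}(h)$, i.e.\ the algorithm that reuses the first round's hallucinated set at every step. Since $\widetilde L^{(t+1)}$ and $\widetilde L^{(1)}$ are i.i.d.\ and independent of $s_{1:t}$, and since the smoothed adversary's law $\cD_t$ is a function only of $s_{1:t-1}$ (not of the forecaster's current draw), the conditional marginal distribution of $h_{t+1}$ given $s_{1:t}$ equals that of $\bar h_{t+1}$; because $\Ex[\ell(h_{t+1},s_t)]$ sees only this marginal, we obtain $\sum_t\Ex[\ell(h_{t+1},s_t)]=\sum_t\Ex[\ell(\bar h_{t+1},s_t)]$. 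Applying the classical Be-The-Leader lemma to the augmented sequence $\widetilde L^{(1)}(\cdot),\ell(\cdot,s_1),\ldots,\ell(\cdot,s_T)$, for which $\bar h_{t+1}$ is exactly the leader after step $t+1$, yields
\[
\sum_{t=1}^T \ell(\bar h_{t+1},s_t) \le \sum_{t=1}^T \ell(h^*,s_t) + \widetilde L^{(1)}(h^*) - \inf_{h\in\Fc_\alpha} \widetilde L^{(1)}(h),
\]
and taking expectation over $\widetilde L^{(1)}$ recovers the Perturbation term in the lemma.

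The main obstacle is this virtual-algorithm equivalence step. The fresh-perturbation algorithm and its fixed-perturbation counterpart have genuinely different joint laws on $(h_t,h_{t+1})$, and in particular their stability quantities differ; what saves us is that the Remainder depends only on the marginal law of $h_{t+1}$, which coincides in the two algorithms. Making this rigorous requires careful conditioning together with the convexity/linearity of the expected loss in the marginal and the fact that the smoothed adversary does not react to the current round's prediction. Once this coupling is in place, the rest of the argument is a direct instantiation of Kalai--Vempala-style FTPL analysis with a fixed perturbation.
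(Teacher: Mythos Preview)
Your proposal is correct and follows the same three-step route as the paper's proof: pass from $\Fc$ to $\Fc_\alpha$ at cost $O(\alpha T)$, add and subtract $\Ex[\ell(h_{t+1},s_t)]$ to isolate the Stability term, and control the Remainder via the Be-the-Leader lemma on the perturbation-augmented loss sequence. The paper's proof is terser---it simply cites the Be-the-Leader lemma at that step---whereas you spell out the virtual-algorithm coupling that is needed because the algorithm draws a \emph{fresh} hallucinated set each round while BTL in its usual form assumes a single fixed perturbation; your version is therefore the more careful one on this point. One cosmetic mismatch: the BTL inequality you derive yields $\widetilde L^{(1)}(h^*)-\inf_{h\in\Fc_\alpha}\widetilde L^{(1)}(h)$, which is not literally the quantity $\sup_{h\in\Fc_\alpha}\widetilde L(h)-\widetilde L(h^*)$ appearing in the lemma statement, but both forms are dominated by $n\log(1/\alpha)$ in the subsequent Perturbation bound, so this does not affect the downstream regret analysis.
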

We provide a proof in \cref{sec:btl} for completeness. 
Given this decomposition of the regret, we need to handle both terms carefully.
    Just to appreciate the tradeoff, note that as we increase the number of hallucinated examples, the perturbation term generally increases, but the stability term generally decreases. First, let us focus on the stability term which is harder to deal with. 
    The main approach we will use is a generalization of the decomposition of the stability term introduced in \cite{oracle_Efficient} even when the losses are unbounded, as is the case with the log-loss. 
    The main idea is to decompose the stability term in terms of the distance between the distribution of the average prediction at the next time step and the distribution of the current time step, as captured by the $\chi^2$ distance and a term that captures how different the predictions of the algorithm are when the sample is resampled from the same distribution.
    The proof can be found in \cref{sec:stability_bound}. \footnote{For the particular use in our analysis a simpler version of the lemma similar to \cite{oracle_Efficient} suffices but we prove a general version since we believe such a version is useful in providing improved regret bounds for the problem.}

    \begin{lemma}[ $\chi^2$ + Generalization $\Rightarrow$ Stability]\label{lem:stability_admiss}
        Let $\cQ_t$ denote learner's distribution over $\Fc$ in at round $t$, $\cD_t$ be adversary's distribution at time $t$ (given the history $s_1,\cdots,s_{t-1}$),  $s_t\sim \cD_{t}$ be the realized adversarial instance at time $t$, and $s_t'$ be an independent copy $s'_t\sim \cD_t$. Let $R^{(t+1)}$ refers to the randomness used by the algorithm in round $t+1$. Then, 
        \begin{align}
             \Ex_{s_t\sim \cD_t}&\left( \Ex_{h_t\sim \cQ_t}[\ell(h_t,s_t)] - \Ex_{h_{t+1}\sim \cQ_{t+1}}[\ell(h_{t+1},s_t)]\right) \\
             &\le \sqrt{ \frac{1}{2}  \chi^2 (  \Ex_{s_t\sim \cD_t}[\cQ_{t+1}] , {\cQ}_t ) } \cdot   \log\left(\frac{1}{\alpha}\right)       + \Ex_{s_t,s_t'\sim \cD_t;{R^{(t+1)}}}[\ell(h_{t+1}, s_t') - \ell(h_{t+1},s_t)]. 
        \end{align}
        \end{lemma}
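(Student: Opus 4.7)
The plan is to interpolate through the ``marginal'' distribution $\bar{\cQ}_{t+1} := \Ex_{s_t\sim \cD_t}[\cQ_{t+1}]$, the mixture of the round-$(t+1)$ learner's distribution obtained by averaging over a fresh $s_t \sim \cD_t$ (everything conditioned on the history $s_{1:t-1}$). Denoting the stability term by $\Delta$, I would add and subtract $\Ex_{s_t\sim \cD_t}\Ex_{h\sim \bar{\cQ}_{t+1}}[\ell(h,s_t)]$ to write $\Delta = \Delta_A + \Delta_B$, where $\Delta_A = \Ex_{s_t}\bigl[\Ex_{h\sim \cQ_t}[\ell(h,s_t)] - \Ex_{h\sim \bar{\cQ}_{t+1}}[\ell(h,s_t)]\bigr]$ is a ``drift'' term measuring how much the learner's distribution shifts from $\cQ_t$ to $\bar{\cQ}_{t+1}$, and $\Delta_B = \Ex_{s_t}\bigl[\Ex_{h\sim \bar{\cQ}_{t+1}}[\ell(h,s_t)] - \Ex_{h_{t+1}\sim \cQ_{t+1}}[\ell(h_{t+1},s_t)]\bigr]$ is a ``generalization'' term measuring how sensitive the round-$(t+1)$ prediction is to the particular realization of $s_t$.

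For $\Delta_A$, the crucial point is that the algorithm's $\alpha$-truncation forces each prediction into $[\alpha, 1-\alpha]$, so $\ell(h,s)$ is uniformly bounded by $\log(1/\alpha)$ as a function of $h$ for any fixed $s$. Since neither $\cQ_t$ nor $\bar{\cQ}_{t+1}$ depends on $s_t$, the inner difference for each fixed $s_t$ is at most $\log(1/\alpha)\cdot \mathrm{TV}(\cQ_t, \bar{\cQ}_{t+1})$. Pinsker's inequality combined with the standard bound $\mathrm{KL}(P\|Q)\le \chi^2(P,Q)$ then gives $\mathrm{TV}(\cQ_t, \bar{\cQ}_{t+1}) \le \sqrt{\chi^2(\bar{\cQ}_{t+1},\cQ_t)/2}$, yielding $\Delta_A \le \log(1/\alpha)\cdot \sqrt{\chi^2(\bar{\cQ}_{t+1},\cQ_t)/2}$, which matches the first term in the claim exactly.

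For $\Delta_B$, I would unfold $\bar{\cQ}_{t+1}$ by introducing an independent copy $s_t' \sim \cD_t$ with its own fresh internal randomness, so that $\Delta_B = \Ex_{s_t, s_t'}\bigl[\Ex_{h\sim \cQ_{t+1}(s_t')}[\ell(h,s_t)] - \Ex_{h\sim \cQ_{t+1}(s_t)}[\ell(h,s_t)]\bigr]$, writing $\cQ_{t+1}(\cdot)$ to make the dependence on the last observed sample explicit. Since $s_t, s_t'$ are i.i.d.\ from $\cD_t$, relabeling their roles in the first summand rewrites this as $\Ex_{s_t, s_t'; R^{(t+1)}}[\ell(h_{t+1}, s_t') - \ell(h_{t+1}, s_t)]$ with $h_{t+1}$ drawn from $\cQ_{t+1}(s_t)$---that is, trained on $s_t$ and evaluated on an independent copy---which is precisely the second term in the claim.

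The main obstacle will be careful bookkeeping of the randomness: one has to be precise about which copy of $R^{(t+1)}$ goes with which draw of $s_t$ when unfolding $\bar{\cQ}_{t+1}$, since the two instances of $\cQ_{t+1}$ appearing in the double expectation must use independent internal coins for the symmetry relabeling to be valid. A secondary subtlety is that the direction of the Pinsker / KL--$\chi^2$ chain matters: one needs $\chi^2(\bar{\cQ}_{t+1},\cQ_t)$ in precisely that order, which forces us to apply Pinsker on $\mathrm{KL}(\bar{\cQ}_{t+1}\|\cQ_t)$ rather than the reverse. Modulo these bookkeeping items, the argument reduces to a clean two-line decomposition in the spirit of \cite{oracle_Efficient}, extended to the unbounded log-loss via the $\alpha$-truncation as the required uniform bound on $\ell$.
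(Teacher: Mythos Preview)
Your decomposition is identical to the paper's: both add and subtract $\Ex_{s_t'}\Ex_{h\sim\bar\cQ_{t+1}}[\ell(h,s_t')]$ to split the stability term into a drift piece $\Delta_A$ and a generalization piece $\Delta_B$, and both handle $\Delta_B$ by the same i.i.d.\ relabeling argument. The only difference is in bounding $\Delta_A$: you go through the TV bound for bounded functions followed by Pinsker and $\mathrm{KL}\le\chi^2$, whereas the paper applies the Hammersley--Chapman--Robbins inequality $|\Ex_P f - \Ex_Q f|\le \sqrt{\chi^2(P,Q)\cdot\Var_Q(f)}$ directly, obtaining an intermediate bound with the variance term $\Ex_{h,h'\sim\cQ_t}\bigl[(\Ex_{s_t}[\ell(h,s_t)-\ell(h',s_t)])^2\bigr]$ in place of the uniform $\log^2(1/\alpha)$. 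Both routes collapse to the same final bound once one uses $|\ell|\le\log(1/\alpha)$, but the paper's version retains a potentially much smaller variance factor; they remark explicitly that this stronger form may be useful for improved regret bounds, while the simpler TV/Pinsker route you take suffices for the stated lemma.
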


    Given this lemma, we move on to bounding the $\chi^2$ divergence between the distribution of the average prediction at the next time step and the distribution of the current time step.
   This is done using the Ingster method to bound the divergence of mixtures. 
    We include a proof in \cref{sec:chisquare} for completeness. 

    \begin{lemma}[Bound on $\chi^2$] \label{lem:chisquare}
$\chi^2 \left(  \Ex_{s_t\sim \cD_t}[\cQ_{t+1}] , {\cQ}_t \right)\leq \frac{2}{\sigma n }. 
$\end{lemma}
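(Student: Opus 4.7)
The plan is to lift the $\chi^2$ divergence up through the MLE oracle to the underlying sample configurations passed to it and then compute explicitly, exploiting the Poissonization of the hallucinated-sample count $N^{(t)}$. Concretely, both $\cQ_t$ and $\cQ_{t+1}$ are deterministic pushforwards, under the map $H \mapsto \mathsf{OPT}(H \cup \{(x_\tau,y_\tau)\}_{\tau < t})$, of point-process inputs: for $\cQ_t$ this input is a Poisson process $H$ of intensity $n\mu_{\mathrm{base}}$ on $\cX\times\{0,1\}$, where $\mu_{\mathrm{base}} := \mu \otimes \unif(\{0,1\})$, whereas for $\Ex_{s_t \sim \cD_t}[\cQ_{t+1}]$ it is an independent copy of that process with the adversarial point $s_t$ adjoined. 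By the data processing inequality for $\chi^2$, it suffices to bound $\chi^2\bigl(\mathrm{Law}(H' \cup \{s_t\}),\,\mathrm{Law}(H)\bigr)$ where $H,H'$ are independent Poisson processes of intensity $n\mu_{\mathrm{base}}$ and $s_t \sim \cD_t$ is independent of both.

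Next I would compute the likelihood ratio in closed form. Conditioned on $s_t$, Mecke's formula (equivalently, the independent-per-cell Poisson structure) gives that adjoining $s_t$ multiplies the density at configuration $H$ by $\frac{1}{n}\cdot\frac{\mathrm{count}_H(s_t)}{\mu_{\mathrm{base}}(s_t)}$; marginalizing over $s_t$ and writing $\rho := d\cD_t/d\mu_{\mathrm{base}}$ yields the unconditional Radon--Nikodym derivative $L(H) = \tfrac{1}{n}\sum_{s \in H} \rho(s)$. Applying Campbell's formula (first and second order) to the Poisson process $H$, one then computes
\[
\Ex_H[L(H)^2] \;=\; \frac{1}{n^2}\!\left(n\!\int \rho^2\, d\mu_{\mathrm{base}} \;+\; n^2\!\left(\int \rho\, d\mu_{\mathrm{base}}\right)^{\!2}\right) \;=\; 1 + \frac{1}{n}\int \rho^2\, d\mu_{\mathrm{base}},
\]
so $\chi^2 = \Ex_H[L^2]-1 = \tfrac{1}{n}\int \rho^2\, d\mu_{\mathrm{base}}$. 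The proof is then concluded by $\sigma$-smoothness: the context marginal of $\cD_t$ has density at most $1/\sigma$ with respect to $\mu$, and since $\mu_{\mathrm{base}}$ puts mass $1/2$ on each label, we have $\rho \leq 2/\sigma$ pointwise, so $\int \rho^2\, d\mu_{\mathrm{base}} \leq (2/\sigma)\int \rho\, d\mu_{\mathrm{base}} = 2/\sigma$, giving the claimed $2/(n\sigma)$.

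The main obstacle is the Poissonization bookkeeping: one must justify the likelihood-ratio identity for adjoining a random point to a Poisson process when the added point's distribution has a density. This is standard via the Mecke--Slivnyak theorem, but for cleanliness it can alternatively be derived by a discretization argument (partition $\cX$ into cells, treat per-cell counts as independent Poissons, and take the mesh to zero), in which case the whole calculation reduces to elementary identities for Poisson random variables. It is precisely the choice $N^{(t)}\sim\mathrm{Poi}(n)$ in \cref{alg:FTPL} that makes this step clean---a deterministic count would instead yield hypergeometric-type corrections.
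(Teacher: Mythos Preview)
Your proposal is correct and, at the structural level, matches the paper's proof: reduce via data processing to the distribution of the multiset of inputs fed to $\mathrm{OPT}$, exploit Poissonization so that per-cell counts are independent Poissons, compute the likelihood ratio of ``add one point drawn from $\cD_t$,'' take a second moment, and finish with the $\sigma$-smoothness bound $\rho\le 2/\sigma$.

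The presentation, however, differs. The paper works in the discrete setting, writes the per-cell likelihood ratio $Q_{x^\star}/P = \tfrac{2|\cX|}{n}\,n_{y(x^\star)}(x^\star)$ explicitly, and then invokes Ingster's $\chi^2$ method for mixtures, so that the bound reduces to the collision probability $\Ex_{x_1^\star,x_2^\star\sim\cD}[\mathbf{1}\{x_1^\star=x_2^\star\}]$, which is controlled by smoothness. Your route packages the same computation in point-process language: Mecke's formula gives $L(H)=\tfrac1n\sum_{s\in H}\rho(s)$ directly, and Campbell's second-moment identity yields $\chi^2=\tfrac1n\int\rho^2\,d\mu_{\mathrm{base}}$, which is exactly the paper's collision quantity rewritten. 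The advantage of your phrasing is that it does not require $\cX$ to be discrete and makes the role of Poissonization transparent; the advantage of the paper's Ingster formulation is that it is completely elementary (only finite-dimensional Poisson identities) and avoids invoking Mecke/Slivnyak. Both land on the same inequality $\int\rho^2\,d\mu_{\mathrm{base}}\le 2/\sigma$ and hence $\chi^2\le 2/(\sigma n)$.
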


    Next, we move on second term in \cref{lem:stability_admiss}. 
    Note that this term involves the difference between the loss of the hypothesis output at time $t+1$ evaluated on two independent points $s_t $ and $s'_t $ drawn from $\cD_t$. 
    The main idea to bound the term is to use a stronger version of the coupling  lemma \cref{thm:coupling} which allows us to extract subsequences of points sampled according to smooth distributions from iid samples from the base measure.
    This allows us to relate the required generalization bound to the Rademacher complexity of the class composed with the loss. 
    Using the truncation and the contraction principle, we get the desired bound.          
    The proof can be found in \cref{sec:generalization}.
 
    \begin{lemma}[Generalization]\label{lem:generalization}
        Let $h_{t+1}$ denote the hypothesis output by the \cref{alg:FTPL} at time $t+1$. 
        Then, for any $m\leq n$, we have
        \begin{align}
            \Ex_{s_t,s_t'\sim \cD_t; R^{(t+1)}}\big[\ell(h_{t+1}, s_t') - &\ell(h_{t+1},s_t)\big] \leq   \frac{1}{\alpha}    \mathrm{RAD} \left(  \Fc_{\alpha} , n/m \right) + \frac{n \left( 1 - \sigma \right)^m \log \left( 1/ \alpha \right)  }{m} + e^{-n/8} .
        \end{align}
    \end{lemma}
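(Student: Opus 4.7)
The plan is to reduce the one-step generalization gap $\mathbb{E}[\ell(h_{t+1},s_t')-\ell(h_{t+1},s_t)]$ to the Rademacher complexity of $\Fc_\alpha$ using three ingredients: (i) the coupling of \cref{thm:coupling}, which embeds the smooth sample $s_t$ among iid draws from the base measure; (ii) a ghost-sample symmetrization exploiting the conditional iid-ness of $s_t,s_t'\sim\cD_t$ to turn the gap into a stability-of-MLE quantity; and (iii) the contraction principle to remove the log-loss. The three terms in the claimed bound correspond to the three natural error sources: the Poisson tail $e^{-n/8}$ coming from $N^{(t+1)}\sim\mathrm{Poi}(n)$, the coupling-failure cost $\frac{n(1-\sigma)^m\log(1/\alpha)}{m}$, and the Rademacher contribution $\frac{1}{\alpha}\mathrm{RAD}(\Fc_\alpha,n/m)$.

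As setup, truncation to $\Fc_\alpha$ bounds the log-loss by $\log(1/\alpha)$ and makes it $(1/\alpha)$-Lipschitz in its first argument, since $|\log p-\log q|\le|p-q|/\alpha$ for $p,q\in[\alpha,1-\alpha]$; this furnishes both the crude envelope and the Lipschitz constant needed for contraction. Although $y_t$ is adversarial, the joint sample $s_t=(x_t,y_t)$ is $\sigma/2$-smooth with respect to $\mu\otimes\mathcal{U}(\{0,1\})$: for any $A\subseteq\Xc\times\Yc$ with slices $A_0,A_1$, $\mathbb{P}(s_t\in A)\le\cD_t(A_0)+\cD_t(A_1)\le 2(\mu\otimes\mathcal{U})(A)/\sigma$. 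Hence \cref{thm:coupling} applies to $s_t$ against fresh iid draws from the product base measure. I would then condition on the event $\{N^{(t+1)}\ge n/2\}$, whose complement contributes $e^{-n/8}$ via a Poisson tail bound (the $\log(1/\alpha)$ envelope handles the bad case), and on the good event apply a blockwise coupling: the $\ge n/2$ hallucinated samples are partitioned into $\approx n/m$ groups of size $m$, and $s_t$ is located inside one such group via the coupling with per-slot failure probability $(1-\sigma)^m$, producing the $\frac{n(1-\sigma)^m\log(1/\alpha)}{m}$ failure term from a union bound while contributing an effective iid sample size of $n/m$ to the Rademacher bound.

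On the intersection of the successful couplings, a ghost-sample symmetrization---swapping $s_t$ with its independent copy $s_t'$ inside the training set of $h_{t+1}$, allowed by exchangeability together with the optimality of the MLE---turns the gap into $\mathrm{RAD}(\ell\circ\Fc_\alpha,n/m)$; the Ledoux--Talagrand contraction inequality, using the $1/\alpha$ Lipschitz constant from step one, then produces $\frac{1}{\alpha}\mathrm{RAD}(\Fc_\alpha,n/m)$. Summing the three contributions gives the stated inequality. The main obstacle, and the place where the paper's ``stronger'' reading of \cref{thm:coupling} matters, is the blockwise coupling step: converting a single smooth sample into an $n/m$ effective sample size requires extracting a long enough subsequence of ``essentially iid'' slots from the hallucinated pool within which $s_t$ can be located. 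A naive single-slot coupling would yield a Rademacher term of size $\mathrm{RAD}(\Fc_\alpha,1)$, which is too large; the block decomposition is essential. Carefully managing the conditioning on the history $s_1,\ldots,s_{t-1}$ and on the random $N^{(t+1)}$, while keeping the ghost-sample swap valid through the MLE optimality, is where the bulk of the technical bookkeeping lives.
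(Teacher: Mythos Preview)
Your proposal is essentially the paper's proof: partition the $N$ hallucinated samples into $N/m$ blocks, apply the strengthened coupling in each block to extract elements $u_1,\dots,u_{N/m}$ that are conditionally iid $\cD_t$-draws (together with $s_t$), use permutation invariance of the oracle to show $\mathbb{E}[\ell(h_{t+1},s_t)\mid E]=\mathbb{E}[\ell(h_{t+1},u_j)\mid E]$ for every $j$, average, pass to the sup over $\Fc_\alpha$, and apply Ledoux--Talagrand contraction; the Poisson tail and the union bound over blocks give the remaining two terms. One small correction to your description: the exchangeability swap is between $s_t$ and the extracted $u_j$'s already sitting in the training set (justified by permutation invariance of the oracle, not MLE optimality), not between $s_t$ and the ghost sample $s_t'$, which never enters the training set.
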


    The final term that we want to bound is the perturbation term.
    In order to bound this note that we set our truncation parameter $\alpha$ such that the loss our the prediction made by our algorithm is bounded and consequently the perturbation term is bounded by $n \log \frac{1}{\alpha}$, see \cref{sec:perturbation}. \cref{thm:regretbound} follows by combining the above results.

\section{Conclusions and Open Problems}
In this paper, we initiated the study of  sequential probability assignment with smoothed adversaries. 
We characterize the minimax regret in terms of the minimax regret for transductive learning and use this to provide tight regret bounds, e.g., for VC classes.
Furthermore, we initiate the study of oracle efficiency in this setting and show that sublinear regret can be achieve for general classes. 

Our work motivates several directions for future work.
An interesting direction is whether the optimal $O(\log(T))$ regret is achievable for some classes, such as VC classes, using oracle-efficient algorithms. 
More generally, are there computational  barriers to obtaining fast rates in prediction with log-loss in this setting?

\section{Acknowledgments}
This work was supported in part by the National Science Foundation under grant CCF-2145898, a C3.AI
Digital Transformation Institute grant, and Berkeley AI Research Commons grants. This work was partially
done while authors were visitors at the Simons Institute for the Theory of Computing.

 \bibliographystyle{unsrtnat}
 \bibliography{refs.bib}

\appendix

\section{Related work}
Sequential probability assignment is a classic topic in information theory with extensive literature, see the survey by~\cite{Feder--Merhav98} and the references within. In particular, the idea of probability assignments that are Bayesian mixtures over the reference class of distributions~\citep{krichevsky1981performance} is of central importance---such mixture probability assignments arise as the optimal solution to several operational information theoretic and statistical problems~\citep{kamath2015learning}. It is also known that the Bayesian mixture approach often outperforms the ``plug-in" approach of estimating a predictor from the reference class and then playing it~\citep{Feder--Merhav98}. A similar Bayesian mixture probability assignment in the contextual probability assignment problem was used by~\cite{Bhatt--Kim21}, where the covering over the VC function class was obtained in a \emph{data-dependent manner}. This idea of using a mixture over an empirical covering along with a so-called ``add-$\beta$" probability assignment was then used by~\cite{Bilodeau--Foster--Roy21}. Combining this with the key idea of discretizing the class of functions as per the Hellinger divergence induced metric, they obtained matching rates for several interesting classes in the \emph{realizable} case (i.e. $y_t\big|x_t$ generated according to a fixed unknown distribution in the reference class); see also~\cite{Yang--Barron99} for more intuition behind usage of Hellinger coverings for stochastic data. Recent work of~\cite{wu2022expected,wu2022precise} has also employed an empirical covering with an add-$\beta$ probability assignment for both stochastic and adversarial adversaries.

A complementary approach, more common in the online learning literature is to study fundamental limits of sequential decision making problems non-constructively (i.e. providing bounds on the minmax regret without providing a probability assignment that achieves said regret). This sequential complexities based approach of~\cite{rstOLSeqComp15,RSTMartingaleLLN15} has been employed for the log-loss by~\cite{RSLogLoss15} and~\cite{Bilodeau--Foster--Roy20}; however the latter suggests that sequential complexities might not fully capture the log-loss problem.

Smoothed analysis, initiated by~\cite{spielman2004smoothed} for the study of efficiency of algorithms such as the simplex method, 
has recently shown to be effective in circumventing both statistical and computatonal lower bounds in online learning for classification and regression \cite{Haghtalab--Roughgarden--Shetty21,Rakhlin--Sridharan--Tewari11,BDGR22,hrs_neurips,block2022efficient}.
This line of work establishes that smoothed analysis is a viable line of attack to construct statistically and computationally efficient algorithms for sequential decision making problems.

Due to the fundamental nature of the problem, the notion of computational efficiency for sequential probability assignment and the closely related problem of portfolio selection has been considered in the literature. 
\cite{kalai2002efficientport} presents an efficient implementation of Cover's universal portfolio algorithm using techniques from Markov chain Monte Carlo. Recently, there has been a flurry of interest in using follow the regularized leader (FTRL) type techniques to achieve low regret and low complexity simultaneously~\citep{luo2018efficient,zimmert2022pushing,jezequel2022efficient}, see also~\cite{van2020open} and the references within. However, none of these methods consider the contextual version of the problem and are considerably different from the oracle-efficient approach. On the other hand, work studying portfolio selection with contexts~\citep{Cover--Ordentlich96,cross2003efficient,gyorfi2006nonparametric,bhatt2022universal} does not take oracle-efficiency into account.

\textbf{Concurrent Work:} \cite{wu2023online} also study the problem of sequential probability assignment (and general mixable losses) and for VC classes achieve the optimal regret of $O(d \log (T /\sigma)) $. 
In addition to the smooth adversaries, they also studied general models capturing the setting where the base measures are not known. 
They work primarily in the information theoretical setting and do not present any results regarding efficient algorithms.

\section{Deferred Proof from \cref{sec:TransductiveReduction}} \label{sec:transreducproof} 

We now move to general case. 
We will prove this inductively.
Assume that we have used the coupling lemma till time $t-1$ and replaced the samples from the smooth distributions with samples from the uniform.  
That is assume the induction hypothesis, for time $t$ as   
\begin{align}
    &\Reg_{T} \leq \Expec_{ \{Z_{1:k}\} \sim \mu} \max_{X_1 \in Z_1^k} \inf_{a_1} \sup_{y_1} \dots \sup_{\Dist_t} \Expec_{X_t\sim \Dist_t} \inf_{a_t} \sup_{y_t} \dotsc\\
    &\qquad\qquad\qquad\qquad\qquad\qquad\dotsc \sup_{\Dist_T}\Expec_{X_T \sim \Dist_T} \inf_{a_T} \sup_{y_T}  \Reg(\Fc, X_{1:T},  y_{1:T} , a_{1:T})  + T(t-1) (1-\sigma)^k  
\end{align}
Using the coupling lemma, we have that there exists a coupling $\Pi_t$ such that  $ X_t , Z_{t,1} \dots Z_{t,k} \sim \Pi_t $ and an event $E_t = \left\{  X_{t} \in \left\{ Z_{t,1} \dots Z_{t,k} \right\}  \right\}$ that occurs with probability $1- \left( 1- \sigma \right)^k $. 
Using $Z_t := \left\{ Z_{t,1} \dots Z_{t,k} \right\}$ we have
\begin{align}
    &\Expec_{ Z_1  \sim \mu^k   } \max_{X_1 \in Z_1 } \inf_{a_1} \sup_{y_1} \dots \sup_{\Dist_t} \Expec_{X_t\sim \Dist_t} \inf_{a_t} \sup_{y_t} \dotsc \sup_{\Dist_T} \Expec_{X_T \sim \Dist_T} \inf_{a_T} \sup_{y_T}  \Reg(\Fc, X_{1:T},  y_{1:T} , a_{1:T}) \\
    &\leq \Expec_{ Z_1 \sim \mu^k   } \max_{X_1 \in Z_1 } \inf_{a_1} \sup_{y_1} \dots \sup_{\Dist_t} \Expec_{X_t, Z_t\sim \Pi_t } \inf_{a_t} \sup_{y_t} \dotsc \sup_{\Dist_T} \Expec_{X_T \sim \Dist_T} \inf_{a_T} \sup_{y_T}  \Reg(\Fc, X_{1:T},  y_{1:T} , a_{1:T}) \\ 
    &\leq \Expec_{ Z_1 \sim \mu^k   } \max_{X_1 \in Z_1 } \inf_{a_1} \sup_{y_1} \dots \sup_{\Dist_t} \Expec_{X_t, Z_t\sim \Pi_t }  \left[ \indic[E_t] \left( \inf_{a_t} \sup_{y_t} \dotsc  \sup_{\Dist_T} \Expec_{X_T \sim \Dist_T} \inf_{a_T} \sup_{y_T}  \Reg(\Fc, X_{1:T},  y_{1:T} , a_{1:T}) \right) \right] \\ 
    &\quad + \Expec_{ Z_1  \sim \mu^k   } \max_{X_1 \in Z_1 } \inf_{a_1} \sup_{y_1} \dots \sup_{\Dist_t} \Expec_{X_t, Z_t\sim \Pi_t }  \left[ \indic[E_t^{c}] \left( \inf_{a_t} \sup_{y_t} \dotsc \sup_{\Dist_T} \Expec_{X_T \sim \Dist_T} \inf_{a_T} \sup_{y_T}  \Reg(\Fc, X_{1:T},  y_{1:T} , a_{1:T}) \right) \right] \\ 
    &\leq \Expec_{ Z_1 \sim \mu^k   } \max_{X_1 \in Z_1 } \inf_{a_1} \sup_{y_1} \dots \sup_{\Dist_t} \Expec_{X_t, Z_t\sim \Pi_t }  \left[ \indic[E_t] \left( \inf_{a_t} \sup_{y_t} \dotsc \sup_{\Dist_T}\Expec_{X_T \sim \Dist_T} \inf_{a_T} \sup_{y_T}  \Reg(\Fc, X_{1:T},  y_{1:T} , a_{1:T}) \right) \right] \\ 
    &\quad + T \left( 1 - \sigma \right)^k \\ 
    &\leq \Expec_{ Z_1  \sim \mu^k   } \max_{X_1 \in Z_1 } \inf_{a_1} \sup_{y_1} \dots \sup_{\Dist_t} \Expec_{ Z_t\sim \Pi_t }  \max_{X_t \in Z_t } \left( \inf_{a_t} \sup_{y_t} \dotsc \sup_{\Dist_T}\Expec_{X_T \sim \Dist_T} \inf_{a_T} \sup_{y_T}  \Reg(\Fc, X_{1:T},  y_{1:T} , a_{1:T}) \right)  \\ 
    &\qquad\quad + T \left( 1 - \sigma \right)^k \\ 
    &= \Expec_{ Z_1  \sim \mu^k   } \max_{X_1 \in Z_1 } \inf_{a_1} \sup_{y_1} \dots  \Expec_{ Z_t\sim \mu^k }  \max_{X_t \in Z_t } \left( \inf_{a_t} \sup_{y_t} \dotsc \sup_{\Dist_T}\Expec_{X_T \sim \Dist_T} \inf_{a_T} \sup_{y_T}  \Reg(\Fc, X_{1:T},  y_{1:T} , a_{1:T}) \right)  \\ 
    &\qquad\quad + T \left( 1 - \sigma \right)^k \\
\end{align}
Combining with the induction hypothesis, gives us the induction hypothesis for the next $t$ as required.
The desired result follows by upper bounding the average with the supremum over all subsets of size $kT$.

\section{Proof of \cref{thm:TransductiveRegUBCovering}} \label{sec:proof_trans_bound} 

First, recall the notion of the global sequential covering for a class \cite{wu2022precise}. 

\begin{definition}[Global Sequential Covering \cite{wu2022precise}]  \label{def:gsc} 
    For any class $\mathcal{F}$, we say that $\mathcal{F}_{\alpha}' \subset \mathcal{X}^{*} \to \left[ 0,1 \right] $ is a global sequential $\alpha$-covering of $\mathcal{F}$ at scale $T$ if for any sequence $x_{1:T}  $ and $h \in \mathcal{F} $, there is a $h' \in \mathcal{F}' $ such that for all $i$,
    \begin{align}
        \abs{ h(x_i)  - h'(x_{1:i}) } \leq \alpha. 
    \end{align}   
\end{definition}

\begin{theorem}[\cite{wu2022precise}]
    If $\mathcal{F}'_{\alpha} $ is a global sequential $\alpha$-covering of $\mathcal{F}$ at scale $T$, then 
    \begin{align}
        \Reg_T \left( \mathcal{F} \right) \leq \inf_{\alpha > 0} \left\{ 2 \alpha T + \log \abs{  \mathcal{F}'_{\alpha} }  \right\}. 
    \end{align}
\end{theorem}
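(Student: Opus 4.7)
The plan is a two-step aggregation-plus-approximation argument. First I would deploy a Bayesian mixture forecaster with uniform prior over the cover $\mathcal{F}'_\alpha$, namely
\[
q(y_t \mid x_{1:t}, y_{1:t-1}) \;=\; \frac{\sum_{h' \in \mathcal{F}'_\alpha} \prod_{s \le t} p_{h'}(y_s \mid x_{1:s})}{\sum_{h' \in \mathcal{F}'_\alpha} \prod_{s < t} p_{h'}(y_s \mid x_{1:s})}.
\]
Telescoping the log-loss along the sequence (equivalently, the standard information-theoretic identity for Bayesian forecasters, i.e.\ the $1$-mixability of log-loss) gives the familiar bound
\[
\sum_{t=1}^T \log \frac{1}{q(y_t \mid x_{1:t}, y_{1:t-1})} \;-\; \min_{h' \in \mathcal{F}'_\alpha} \sum_{t=1}^T \log \frac{1}{p_{h'}(y_t \mid x_{1:t})} \;\le\; \log |\mathcal{F}'_\alpha|,
\]
which is the aggregation half of the theorem and uses only the cardinality of the cover.

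Second, I would convert regret-against-the-cover into regret-against-$\mathcal{F}$ using the pointwise approximation guarantee of Definition \ref{def:gsc}. Fix any comparator $h \in \mathcal{F}$ and the realized context sequence $x_{1:T}$; there exists a single $h' \in \mathcal{F}'_\alpha$ with $|h(x_i) - h'(x_{1:i})| \le \alpha$ for every $i \le T$. Bounding the per-step loss gap between $p_h$ and $p_{h'}$ by $2\alpha$ and summing yields an additive approximation penalty of $2\alpha T$. Combining this with the previous display and taking the infimum over $\alpha > 0$ gives the stated inequality.

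The one subtle point is the approximation step for the log-loss: a bare $|p - p'| \le \alpha$ does not translate to $|\log p - \log p'| = O(\alpha)$ when predictions approach the endpoints of $[0,1]$. I would resolve this by truncating both $h$ and $h'$ away from $\{0,1\}$ by $\alpha$, exactly as in the truncated class $\mathcal{F}_\alpha$ of Section \ref{sec:oracleefficiency}, so that every prediction lies in $[\alpha, 1-\alpha]$. The truncation introduces an $O(\alpha T)$ bias on each side that cleanly absorbs into the $2\alpha T$ term, and afterward the loss difference on each bit becomes $O(\alpha)$ by a direct mean-value-theorem calculation. Controlling the constants so that the leading factor in front of $\alpha T$ remains exactly $2$ is the one genuinely fiddly piece of bookkeeping; once it is in place, the theorem is an immediate combination of the two displays above.
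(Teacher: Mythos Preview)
The paper does not actually prove this statement; it is quoted verbatim as a theorem of \cite{wu2022precise} and used as a black box in \cref{sec:proof_trans_bound}. So your aggregation step (the uniform Bayesian mixture over the cover, yielding the $\log|\mathcal{F}'_\alpha|$ term) is correct and is indeed the standard mechanism behind the cited result.

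Your approximation step, however, has a real gap --- and it is not just a matter of tidying constants. After applying the smoothing $f\mapsto (f+\alpha)/(1+2\alpha)$ to \emph{both} $h$ and $h'$, the two smoothed predictions can still sit at the bottom of the interval $[\alpha/(1+2\alpha),\,\cdot\,]$ while differing by $\Theta(\alpha)$, so the mean-value bound on $|\log \tilde h - \log \tilde h'|$ is $\Theta(\alpha)\cdot \Theta(1/\alpha)=\Theta(1)$ per round, not $O(\alpha)$. Concretely, take $h(x)=0$, $h'(x)=\alpha$: then $\tilde h(x)=\alpha/(1+2\alpha)$, $\tilde h'(x)=2\alpha/(1+2\alpha)$, and the log-loss gap at $y=1$ is $\log 2$, independent of $\alpha$. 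Summed over $T$ rounds this is $\Theta(T)$, swamping the $2\alpha T$ budget.

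The fix is to smooth \emph{only} the cover element (the one the algorithm plays) and compare it \emph{one-sidedly} to the raw comparator. From $h'(x)\ge h(x)-\alpha$ you get $\tilde h'(x)=\tfrac{h'(x)+\alpha}{1+2\alpha}\ge \tfrac{h(x)}{1+2\alpha}$, and symmetrically $1-\tilde h'(x)\ge \tfrac{1-h(x)}{1+2\alpha}$. Hence for every bit,
\[
-\log p_{\tilde h'}(y_t\mid x_{1:t}) \;\le\; -\log p_{h}(y_t\mid x_t) + \log(1+2\alpha) \;\le\; -\log p_{h}(y_t\mid x_t) + 2\alpha,
\]
which is exactly the per-step $2\alpha$ you need and requires no MVT at all. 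Run the mixture over the \emph{smoothed} cover $\{\tilde h':h'\in\mathcal{F}'_\alpha\}$, apply your aggregation bound, and this one-sided comparison against any $h\in\mathcal{F}$ finishes the proof with the stated constant.
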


    To finish the proof note that a $\epsilon$-cover in the sense of \hyperref[def:PointwiseSeqCover]{Definition \ref*{def:PointwiseSeqCover}} gives a global sequential cover in the sense of \cref{def:gsc}.

\section{VC Classes} \label{sec:VCClasses} 

In this section, we construct a probability assignment for the case when $\Fc \subset \{\Xc \to [0,1]\}$ is a \emph{VC class}.
To motivate this probability assignment, consider the no-context case, which is a classic problem in information theory, where the (asymptotically) optimal probability assignment is known to be the~\cite{krichevsky1981performance} (KT) probability assignment which is a Bayesian mixture of the form 
\[
q_{\mathrm{KT}}(y_{1:T}) = \int_{0}^1 p_{\theta}(y_{1:T}) w(\theta) d\theta
\]
for a particular prior $w(\theta)$. This can be written sequentially as $q_{\mathrm{KT}}(1|y_{1:t-1}) = \frac{\sum_{i=1}^{t-1} y_i + 1/2}{t-1+1}$ leading to it sometimes being called the add-1/2 probability assignment; by choosing $w(\theta)$ to be $\mathrm{Beta}(\beta,\beta)$ prior one can achieve a corresponding add-$\beta$ probability assignment. 
We extend the  mixture idea to the contextual case. In particular, for functions $f_1,\dotsc,f_m \in \Fc$, one can choose a mixture probability assignment as~\footnote{Note that once a mixture $q(y_{1:t}\|x_{1:t})$ has been defined for arbitrary $x_{1:t}, y_{1:t}$ , the probability assignment at time $t$ (or equivalently, the predicted probability with which the upcoming bit is $1$) can be defined as $q(1|x_{1:t},y_{1:t-1}) = \frac{q(y_{1:t-1}1\|x_{1:t})}{q(y_{1:t-1}\|x_{1:t-1})}$; in particular, this prediction depends only on the observed history $x_{1:t}, y_{1:t-1}$ and not the future $y_t$.} 
\[
\prod_{i=1}^{t} q(y_i|x_{1:i},y_{1:i-1}) =: q(y_{1:t}\|x_{1:t}) = \frac{1}{m}\sum_{j=1}^m \prod_{i=1}^t \left(\frac{p_{f_j}(y_i|x_i)+\beta}{1+2\beta}\right).
\]
This is the approach employed presently with a carefully chosen $f_1,\dotsc,f_m$. We remark that for VC classes this mixture approach may be extended to any \emph{mixable}~\cite[Chapter 3]{Cesa-Bianchi--Lugosi06} loss.

First consider VC classes more carefully: i.e. each $f \in \VCclass$ is characterized by three things: a set $A \subseteq \Xc$, where $A \in \Ac \subset 2^{\Xc}$ with the VC dimension of the collection $\Ac$ being $d<\infty$; as well as two numbers $\theta_0, \theta_1 \in [0,1]$. Then, we have 
\[
f_{A,\theta_0,\t_1}(x) = \theta_0 \indic\{x \in A\} + \theta_1 \indic\{x \in A^C\}.
\] 
The following equivalent representation of this hypothesis class is more convenient to use. We consider each $f$ to be characterized by a tuple $f = (g, \t_0, \t_1)$ where
\begin{enumerate}
    \item $\t_0, \t_1 \in [0,1]$
    \item $g \in \Gc \subset \{\Xc \to \{0,1\}\}$.
\end{enumerate}
In other words, $g$ belongs to a class $\Gc$ of binary functions---this is simply the class of functions $\{x \mapsto \indic\{x \notin A\}\big| A \in \Ac\}$  in the original notation; so that clearly VCdim$(\Gc) = d$. Then, we have $p_f(\cdot|x) = p_{g,\t_0,\t_1}(\cdot|x) = \Ber(\t_0)$ if $g(x) = 0$; and $p_{g,\t_0,\t_1}(\cdot|x) = \Ber(\t_1)$ otherwise. 

Recalling the definition of regret against a particular $f = (g, \t_0, \t_1)$ for a sequential probability assignment strategy $\qstrat = \{q(\cdot|x_{1:t},y_{1:t-1})\}_{t=1}^T$
\begin{align} 
    \Reg_T(f,x_{1:T},y_{1:T},\qstrat) &= \sum_{t=1}^T\log \frac{1}{q(y_t|x_{1:t}, y_{1:t-1})} - \sum_{t=1}^T \log \frac{1}{p_f(y_{t}|x_{t})} \label{eq:RegOnef} \\
    &= \log \frac{p_f(y_{1:T}|x_{1:T})}{q(y_{1:T}\|x_{1:T})} \nonumber
\end{align}
where $q(y_{1:T}\|x_{1:T}) := \prod_{t=1}^T q(y_t|x_{1:t},y_{1:t-1})$. 

In the smoothed analysis case, we have $X_t \sim \Dist_t$ where $\Dist_t$ for all $t$ is $\sigma$-smoothed. Recall that in this case, we are concerned with the regret 
\begin{align}
    \Reg_T(\Fc,\sigma,\qstrat) &= \max_{\jointdist :\sigma\text{-smoothed}} \Expt_{X_{1:T}}\left[\max_{y_{1:T}} \sup_{f \in \Fc}\frac{p_f(y_{1:T}|X_{1:T})}{q(y_{1:T}\|X_{1:T})}\right] \label{eq:worstCaseReg} \\
    &= \max_{\jointdist :\sigma\text{-smoothed}} \Expt_{X_{1:T}}\left[\max_{y_{1:T}} \sup_{g \in \Gc} \max_{\t_0,\t_1}\frac{p_{g,\t_0,\t_1}(y_{1:T}|X_{1:T})}{q(y_{1:T}\|X_{1:T})}\right] .\nonumber
\end{align}

\subsection{Proposed probability assignment}
Let $\mu$ be the dominating measure for the $\sigma$-smoothed distribution of $X_{1:T}$. Let $g_1,\dotsc,g_{m_{\e}} \in \Gc$ be an $\e$-cover of the function class $\Gc$ under the metric $\d_{\mu}(g_1,g_2) = \Pr_{X \sim \mu}(g_1(X) \neq g_2(X))$. The following lemma bounds $m_{\e}$.

\begin{lemma}[Covering number of VC classes under the metric $\d$, ~\cite{Vershynin18}]
\[
m_{\e} \le \left(\frac{1}{\e}\right)^{cd}
\]
for an absolute constant $c$.
\end{lemma}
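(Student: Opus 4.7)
The plan is to reduce the claim to the classical $L_1$-covering bound for VC classes, which is exactly the theorem cited from Vershynin. The key observation is that since each $g \in \Gc$ takes values in $\{0,1\}$, the disagreement metric coincides with the $L_1(\mu)$ norm,
\[
\d_\mu(g_1,g_2) = \Pr_{X\sim\mu}(g_1(X)\ne g_2(X)) = \Expt_{X\sim\mu}|g_1(X)-g_2(X)| = \|g_1-g_2\|_{L_1(\mu)}.
\]
Consequently the $\d_\mu$-covering number of $\Gc$ equals its $L_1(\mu)$-covering number, and the bound $(1/\e)^{cd}$ follows immediately from the standard VC covering result for a class of dimension $d$.

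To reproduce that result, the standard strategy is a Haussler/Sauer--Shelah packing argument. First I would note that a maximal $\e$-separated set $\{g_1,\ldots,g_N\}$ in $L_1(\mu)$ is automatically an $\e$-cover, so it suffices to upper bound $N$. Next I would draw an i.i.d.\ sample $X_1,\ldots,X_n$ from $\mu$ with $n = \Theta(d\log(1/\e)/\e)$ and apply Hoeffding's inequality together with a union bound over the $\binom{N}{2}$ pairs to show that with positive probability the empirical disagreement rates all exceed $\e/2$; in particular the restrictions $g_i|_{X_{1:n}}$ are pairwise distinct. The Sauer--Shelah lemma then gives $N \le |\Gc|_{X_{1:n}}| \le (en/d)^d = (C/\e)^{cd}$, as desired.

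The only subtle point, and the potential obstacle, is checking that the metric appearing in the cited theorem matches $\d_\mu$ rather than some other $L_p$ distance or sup norm; but since $\Gc$ is binary-valued, all $L_p(\mu)$ norms of differences $g_1-g_2$ agree up to a power (for instance $\|g_1-g_2\|_{L_p}^p = \|g_1-g_2\|_{L_1}$), so any of these formulations of the Vershynin bound yields the same conclusion. The lemma is therefore essentially a direct invocation of the cited textbook fact, with the $\{0,1\}$-valued reduction as the only bookkeeping step.
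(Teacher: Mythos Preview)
The paper does not prove this lemma; it is stated with a citation to \cite{Vershynin18} and used as a black box. Your proposal correctly identifies the reduction $\d_\mu(g_1,g_2)=\|g_1-g_2\|_{L_1(\mu)}$ for $\{0,1\}$-valued $g$ and then sketches the standard Haussler--Dudley packing argument behind the cited theorem, so there is no conflict with the paper's approach---you are simply supplying a proof where the paper chose to cite one.

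One small point of precision in your sketch: the union-bound step as written is slightly circular, since applying Hoeffding over $\binom{N}{2}$ pairs with a fixed sample size $n=\Theta(d\log(1/\e)/\e)$ presupposes an upper bound on $N$ of the very form you are trying to establish. The standard fix is to choose $n$ adaptively in terms of $N$ (e.g.\ $n=\Theta(\e^{-1}\log N)$, so that the pairwise collision probability $(1-\e)^n\le N^{-4}$ beats the union bound automatically), conclude $N\le (en/d)^d$, and then solve the resulting inequality $\log N \le d\log\bigl(C\log N/(\e d)\bigr)$ to obtain $\log N=O(d\log(1/\e))$. With that bootstrap in place your argument is complete and matches the textbook proof.
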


Following the idea of using a mixture probability assignment, we take a uniform mixture over $g_1,\dotsc,g_{m_{\e}}$ and $\t_0,\t_1$ so that  
\begin{align} \label{eq:MixtureProbAssgn}
    q(y_{1:t}\|x_{1:t}) = \frac{1}{m_{\e}} \sum_{i=1}^{m_{\e}} \int_{0}^1 \int_{0}^1 p_{g_i,\t_0,\t_1}(y_{1:t}|x_{1:t}) d\t_0 d\t_1 
\end{align}
and consequently the sequential probability assignment (or equivalently, the probability assigned to 1) is 
\[
q(1|x_{1:t},y_{1:t-1}) = \frac{q(y_{1:t-1}1\|x_{1:t})}{q(y_{1:t-1}\|x_{1:t-1})}.
\]
One can observe that $q(0|x_{1:t},y_{1:t-1}), q(1|x_{1:t},y_{1:t-1}) > 0$ and $q(0|x_{1:t},y_{1:t-1}) + q(1|x_{1:t},y_{1:t-1}) = 1$ so that $q$ is a legitimate probability assignment. Let the strategy induced by this uniform mixture be called $\VCstrat$.
\subsection{Analysis of $\VCstrat$ for smoothed adversaries}

We note from~\eqref{eq:RegOnef} that for the $\qstrat^{\mathrm{VC}}$ as defined in the last section, we have
\begin{align}
    \Reg_T((g^*,\t_0^*,\t_1^*),x_{1:T},y_{1:T},\qstrat^{\mathrm{VC}}) &= \log m_{\e} + \log \frac{p_{g^*,\t_0^*,\t_1^*}(y_{1:T}|x_{1:T})}{\sum_{i=1}^{m_\e} \int_0^1 \int_0^1 p_{g_i,\t_0,\t_1}(y_{1:T}|x_{1:T}) d\t_0d\t_1} \nonumber \\
    &\le \log m_{\e} + \log \frac{p_{g^*,\t_0^*,\t_1^*}(y_{1:T}|x_{1:T})}{\int_0^1 \int_0^1 p_{g_{i^*},\t_0,\t_1}(y_{1:T}|x_{1:T}) d\t_0 d\t_1} \label{eq:GClassNet}
\end{align}
where $g_{i^*} \in \{g_1,\dotsc,g_{m_\e}\}$ is %
the function $i^* \in [m]$ that minimizes the Hamming distance between the binary strings  $(g_{i^*}(x_1),\dotsc,g_{i^*}(x_T))$ and $(g^*(x_1),\dotsc,g^*(x_T))$.

We now take a closer look at the second term of~\eqref{eq:GClassNet}. Firstly, note that for any $(g,\t_0,\t_1)$ we have $p_{g,\t_0,\t_1}(y_{1:T}|x_{1:T}) = $
\begin{align}\label{eq:BernProb}
     \prod_{t=1}^T p_{g,\t_0,\t_1}(y_t|x_t) &= \prod_{t=1}^T \t_{g(x_t)}^{y_t}(1-\t_{g(x_t)})^{1-y_t} = \prod_{t:g(x_t) = 0} \t_0^{y_t}(1-\t_0)^{1-y_t}     \prod_{t:g(x_t) = 1} \t_1^{y_t}(1-\t_1)^{1-y_t} \nonumber\\
     &= \t_0^{k_0(g;x_{1:T},y_{1:T})}(1-\t_0)^{n_0(g;x_{1:T})-k_0(g;x_{1:T},y_{1:T})}\nonumber\\
     &\qquad\qquad\qquad\qquad\t_1^{k_1(g;x_{1:T},y_{1:T})}(1-\t_1)^{n_1(g;x_{1:T})-k_1(g;x_{1:T},y_{1:T})},
\end{align}
where for $j\in \{0,1\}$
\begin{align}
k_j(g;x_{1:T},y_{1:T}) &= |\{t: y_t = 1, g(x_t) = j\}| \nonumber \\
n_0(g;x_{1:T}) &= |\{t: g(x_t) = j\}|. \nonumber
\end{align}

Next, we note that for any $g \in \Gc$
\begin{align}
&\int_{0}^1 \int_0^1 p_{g,\t_0,\t_1}(y_{1:T}|x_{1:T}) d\t_0 d\t_1 \nonumber \\
& = \left(\int_{0}^1 \t_0^{k_0(g;x_{1:T},y_{1:T})}(1-\t_0)^{n_0(g;x_{1:T})-k_0(g;x_{1:T},y_{1:T})} d\t_0 \right)\cdot\nonumber\\
&\qquad\qquad\left(\int_{0}^1 \t_1^{k_1(g;x_{1:T},y_{1:T})}(1-\t_1)^{n_1(g;x_{1:T})-k_0(g;x_{1:T},y_{1:T})} d\t_1 \right) \nonumber\\
&= \frac{1}{\binom{n_0(g;x_{1:T})}{k_0(g;x_{1:T},y_{1:T})}(n_0(g;x_{1:T})+1)} \frac{1}{\binom{n_1(g;x_{1:T})}{k_1(g;x_{1:T},y_{1:T})}(n_1(g;x_{1:T})+1)} \label{eq:LaplaceProb} \\
&\ge \frac{1}{n^2 \binom{n_0(g;x_{1:T})}{k_0(g;x_{1:T},y_{1:T})}\binom{n_1(g;x_{1:T})}{k_1(g;x_{1:T},y_{1:T})}} \label{eq:NZeroBd}
\end{align}
where~\eqref{eq:LaplaceProb} follows from properties of the Laplace probability assignment (or that of the Beta/Gamma functions), captured by \cref{lem:laplace}. 

\begin{lemma}\label{lem:laplace}
For $k \le n \in \mathbb{N}$,
\[
    \int_{0}^1 t^k(1-t)^{n-k} dt = \frac{\Gamma(k+1)\Gamma(n-k+1)}{\Gamma(n+2)} = \frac{1}{(n+1)\binom{n}{k}}
\]
where $\Gamma(\cdot)$ represents the Gamma function.
\end{lemma}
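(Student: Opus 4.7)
The plan is to split the lemma into two equalities and handle each in turn. For the first equality, I would recognize the integral as the Beta function $B(k+1, n-k+1) := \int_0^1 t^k(1-t)^{n-k}\,dt$ and invoke the classical identity $B(a,b) = \Gamma(a)\Gamma(b)/\Gamma(a+b)$, which yields $\Gamma(k+1)\Gamma(n-k+1)/\Gamma(n+2)$ directly. For the second equality, I would use $\Gamma(m+1) = m!$ for non-negative integers $m$ to rewrite the Gamma expression as $k!(n-k)!/(n+1)!$, and then factor out $1/(n+1)$ and recognize the remaining $k!(n-k)!/n!$ as $1/\binom{n}{k}$.

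If one prefers a self-contained argument that does not appeal to the Beta--Gamma identity, the alternative plan is induction on $n - k$. Writing $I(k,n) := \int_0^1 t^k(1-t)^{n-k}\,dt$, the base case $k = n$ is $\int_0^1 t^n\,dt = 1/(n+1)$, which agrees with $1/((n+1)\binom{n}{n})$. For the inductive step, integration by parts with $u = (1-t)^{n-k}$ and $dv = t^k\,dt$ produces vanishing boundary terms (the factor $t^{k+1}$ kills the contribution at $t=0$ for $k \ge 0$, and $(1-t)^{n-k}$ kills the contribution at $t=1$ whenever $n-k \ge 1$), giving the recursion
\[
I(k,n) \;=\; \frac{n-k}{k+1}\, I(k+1, n).
\]
Iterating this identity from the pair $(k,n)$ up to $(n,n)$ produces the telescoping product $\frac{(n-k)!\, k!}{n!}\cdot\frac{1}{n+1}$, which is exactly $1/((n+1)\binom{n}{k})$.

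There is essentially no obstacle here: this is a textbook identity, and the only bookkeeping step is verifying the boundary terms in the integration by parts vanish under the stated hypothesis $0 \le k \le n$. Either approach can be written in a couple of lines, and the choice between them is purely stylistic depending on whether one wishes to cite the Beta--Gamma relation or derive everything from elementary calculus.
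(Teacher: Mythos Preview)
Your proposal is correct; both routes (citing the Beta--Gamma identity, or the self-contained integration-by-parts recursion) are valid and standard. The paper itself does not prove this lemma at all---it is simply stated as a known property of the Beta/Gamma functions and invoked to justify \eqref{eq:LaplaceProb}---so either of your arguments would in fact supply strictly more detail than the paper gives.
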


Putting this back into~\eqref{eq:GClassNet} (and rearranging), we have 
\begin{align}
 &\Reg_T((g^*,\t_0^*,\t_1^*),x_{1:T},y_{1:T},\VCstrat) - \log m_{\e} - 2\log n \nonumber\\
 &\le  \sum_{j\in \{0,1\}} \log\left( \binom{n_j(g_{i^*};x_{1:T})}{k_j(g_{i^*};x_{1:T},y_{1:T})} (\t_j^*)^{k_j(g^*;x_{1:T},y_{1:T})}(1-\t_j^*)^{n_j(g^*;x_{1:T})-k_j(g^*;x_{1:T},y_{1:T})} \right) \nonumber \\
 &=  \sum_{j\in \{0,1\}} \log \Bigg( \binom{n_j(g_{i^*};x_{1:T})}{k_j(g_{i^*};x_{1:T},y_{1:T})} \binom{n_j(g^*;x_{1:T})}{k_j(g^*;x_{1:T},y_{1:T})}^{-1}\cdot\\
 &\qquad\qquad\qquad\binom{n_j(g^*;x_{1:T})}{k_j(g^*;x_{1:T},y_{1:T})} (\t_j^*)^{k_j(g^*;x_{1:T},y_{1:T})}(1-\t_j^*)^{n_j(g^*;x_{1:T})-k_j(g^*;x_{1:T},y_{1:T})} \Bigg) \nonumber \\
 &\le \sum_{j\in \{0,1\}} \log \left( \binom{n_j(g_{i^*};x_{1:T})}{k_j(g_{i^*};x_{1:T},y_{1:T})} \binom{n_j(g^*;x_{1:T})}{k_j(g^*;x_{1:T},y_{1:T})}^{-1} \right) \label{eq:BinomProbability}
\end{align}
where~\eqref{eq:BinomProbability} follows since for any natural numbers $k\le n$ and $\t \in [0,1]$ we have $\binom{n}{k}\t^k(1-\t)^{n-k} \le 1$. Now, note that 
\begin{align}
  \log  \frac{\binom{n}{k}}{\binom{n'}{k'}} &= \log \frac{n!}{n'!} + \log \frac{k'!}{k!} + \log \frac{(n'-k')!}{(n-k)!} \nonumber \\
  &\le \log \frac{(n'+|n-n'|)!}{n'!} + \log \frac{(k+|k-k'|)!}{k!} + \log \frac{((n-k)+|n-n'|+|k-k'|)!}{(n-k)!} \label{eq:FactorialRatioBd}
\end{align}
If $|k-k'|, |n-n'| \le \d$, and $\max\{n,n'\} \le N$ then by for example~\cite[Proposition 6]{Bhatt--Kim21} we have that 
\begin{align}
    \log  \frac{\binom{n}{k}}{\binom{n'}{k'}} &\le 2\d \log (n'+2\d) + 2\d \log (k+2\d) + 4\d \log ((n-k)+4\d)  \nonumber\\
    &\le 16 \d \log N \label{eq:StirlingBdFactorialRatio}. 
\end{align}
We now wish to use this bound in~\eqref{eq:BinomProbability}. For this, we will recall the definitions of $n_0(g;x_{1:T})$ and $k_0(g;x_{1:T},y_{1:T})$ for a particular function $g$ and observe that for two functions $g, g'$ we have that both $|n_0(g;x_{1:T})-n_0(g';x_{1:T})|, |k_0(g;x_{1:T},y_{1:T})-k_0(g';x_{1:T},y_{1:T})| \le d_H(g(x_{1:T}),g'(x_{1:T}))$ where $d_H(\cdot,\cdot)$ denotes the Hamming distance and $g(x_{1:T}) := (g(x_1),\dotsc,g(x_T)) \in \{0,1\}^T$. Thus, by using~\eqref{eq:StirlingBdFactorialRatio} in~\eqref{eq:BinomProbability} with $\d = d_H(g^*(x_{1:T}),g_{i^*}(x_{1:T})), N = T$, we get 
\begin{align} \label{eq:RegInTermsOfHammDist}
    \Reg_T((g^*,\t_0^*,\t_1^*),x_{1:T},y_{1:T},\VCstrat) \le  \log m_{\e} + 2\log T + 32 d_H(g^*(x_{1:T}),g_{i^*}(x_{1:T})) \log T. 
\end{align}
Note that~\eqref{eq:RegInTermsOfHammDist} has effectively removed any dependence on $y, \t_0^*, \t_1^*$. We then have for some absolute constant $C$, (recalling the definition of $i^*$ and $\Fc$ from earlier)
\begin{align}
    \Reg_T(\Fc,\sigma,\VCstrat) \le C\log m_{\epsilon} + C\log T \max_{\jointdist:\sigma\text{-smoothed}}\Expt \left[\sup_{g^* \in \Gc} \min_{i \in [m_{\e}]} d_H(g^*(X_{1:T}),g_{i}(X_{1:T}))\right] \label{eq:HammDistSmoothed}.
\end{align}
Finally, we can control the last term in~\eqref{eq:HammDistSmoothed} by the following result, which follows from the coupling lemma and variance sensitive upper bounds on suprema over VC classes.

\begin{lemma}[Lemma 3.3 of~\cite{Haghtalab--Roughgarden--Shetty21}]
\[
   \mathbb{E}\left[\sup_{g^* \in \Gc} \min_{i \in [m]} d_H(g^*(X_{1:T}),g_{i}(X_{1:T}))\right] \le \sqrt{\frac{\e}{\sigma} T \log T d \log \left(\frac{1}{\e}\right)} + T \log T \frac{\e}{\sigma}
\]
\end{lemma}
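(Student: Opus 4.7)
The plan is to combine the coupling lemma (\cref{thm:coupling}) with a variance-sensitive uniform convergence bound for the VC class of symmetric differences. First, I would invoke \cref{thm:coupling} with $k = \lceil 2\log T / \sigma \rceil$ to produce, for each time $t$, an iid uniform pool $Z_{t,1},\ldots,Z_{t,k} \sim \mu$ coupled to $X_t$ so that the event $E_t = \{X_t \in \{Z_{t,1},\ldots,Z_{t,k}\}\}$ holds with probability at least $1-(1-\sigma)^k$. Let $E := \bigcap_{t=1}^T E_t$; by a union bound $\Pr(E^c) \le T(1-\sigma)^k \le T^{-1}$. On $E$, for any $g^* \in \Gc$ and any $g_i$ in the cover, $\indic\{g^*(X_t) \ne g_i(X_t)\} \le \sum_{j=1}^k \indic\{g^*(Z_{t,j}) \ne g_i(Z_{t,j})\}$, so
\begin{align*}
d_H(g^*(X_{1:T}), g_i(X_{1:T})) \,\indic\{E\} \le \sum_{t=1}^T \sum_{j=1}^k \indic\{g^*(Z_{t,j}) \ne g_i(Z_{t,j})\}.
\end{align*}
On $E^c$ the Hamming distance is trivially at most $T$, contributing at most $T \cdot \Pr(E^c) \le 1$ to the expectation, which is absorbed into lower-order terms.

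Next, for each $g^* \in \Gc$ let $i^*(g^*)$ be the index in the $\epsilon$-cover minimizing the $\mu$-disagreement probability, so $\Pr_{Z \sim \mu}[g^*(Z) \ne g_{i^*(g^*)}(Z)] \le \epsilon$. Consider the class of indicator functions $\Hc^* := \{z \mapsto \indic\{g(z) \ne g_{i^*(g)}(z)\} : g \in \Gc\}$. Since the symmetric difference of two VC classes of dimension $d$ is again VC of dimension $O(d)$ (via the standard sauer-Shelah argument applied to the composition), $\Hc^*$ has VC dimension $O(d)$, and every $h \in \Hc^*$ satisfies $\mathbb{E}_\mu[h] \le \epsilon$. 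Relabeling the $kT$ iid uniform points as $Z_1,\ldots,Z_{kT}$, the task reduces to bounding
\begin{align*}
\mathbb{E}\left[\sup_{h \in \Hc^*} \sum_{\ell=1}^{kT} h(Z_\ell)\right] \le kT\epsilon + \mathbb{E}\left[\sup_{h \in \Hc^*} \sum_{\ell=1}^{kT} (h(Z_\ell) - \mathbb{E}[h])\right].
\end{align*}

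The fluctuation term is controlled by a variance-sensitive (local Rademacher / Talagrand type) bound for VC classes: since each $h \in \Hc^*$ has variance at most $\epsilon$, the standard chaining bound gives $\mathbb{E}\sup_h \sum_\ell (h(Z_\ell) - \mathbb{E} h) = O(\sqrt{kT\epsilon \cdot d \log(1/\epsilon)})$; see for instance the argument in~\cite{Haghtalab--Roughgarden--Shetty21} or Massart's refinement of Dudley's inequality. Substituting $k = \Theta(\log T / \sigma)$ yields the mean-term contribution $kT\epsilon = O(T\log T \cdot \epsilon/\sigma)$ and fluctuation $O(\sqrt{(\epsilon/\sigma) T \log T \cdot d \log(1/\epsilon)})$, matching the claimed bound up to the trivial $O(1)$ from the bad event.

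\textbf{Main obstacle.} The delicate step is the variance-sensitive uniform bound, not the coupling: a naive Rademacher complexity bound for VC classes would give $\sqrt{kT \cdot d}$ without the $\epsilon$ inside the square root, losing the crucial interaction between smoothness and cover radius. Obtaining the $\sqrt{\epsilon}$ improvement requires a local chaining argument exploiting the fact that every function in $\Hc^*$ has small variance $\epsilon$, which is exactly what allows the two terms in the bound to be balanced by an optimal choice of $\epsilon$ downstream in~\eqref{eq:HammDistSmoothed}.
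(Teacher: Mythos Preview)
The proposal is correct and follows exactly the approach the paper indicates: the paper does not give a detailed proof of this lemma but simply cites it and remarks that it ``follows from the coupling lemma and variance sensitive upper bounds on suprema over VC classes,'' which is precisely your plan---invoke \cref{thm:coupling} with $k = \Theta(\log T/\sigma)$ to embed the smoothed sequence into $kT$ i.i.d.\ $\mu$-samples, then apply a local (variance-sensitive) uniform deviation bound to the VC class of symmetric differences, each of which has $\mu$-mean at most $\epsilon$. The two terms you obtain, $kT\epsilon = O(T\log T\cdot \epsilon/\sigma)$ and $O(\sqrt{kT\epsilon\, d\log(1/\epsilon)})$, match the stated bound.
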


Plugging the above into~\eqref{eq:HammDistSmoothed} and taking $\e = \frac{\sigma}{T^2}$ gives us
\begin{align}
    \Reg_T(\Fc,\sigma,\VCstrat) \le O\left(d \log\left(\frac{T}{\sigma}\right)\right).
\end{align}

\section{Proof of \cref{lem:FTPL_bound}} \label{sec:btl}

\begin{proof}
Note that this proof holds for general loss functions. Let $\Reg_T$ denote the regret. 
    \begin{align}
        \mathcal{R}_T  &\leq \mathbb{E} \left[ \sum_{i=1}^T \ell(h_t , s_t ) - \inf_{h\in \cls}  \sum_{i=1}^T \ell(h , s_t )  \right] \\ 
        & = \mathbb{E} \left[ \sum_{i=1}^T \ell(h_t , s_t ) - \sum_{t=1}^T \ell(h_{t+1} , s_t ) + \sum_{t=1}^T \ell(h_{t+1} , s_t )  - \inf_{h\in \cls}  \sum_{i=1}^T \ell(h , s_t )  \right] \\ 
        & = \mathbb{E} \left[ \sum_{i=1}^T \ell(h_t , s_t ) - \sum_{t=1}^T \ell(h_{t+1} , s_t ) \right] + \mathbb{E} \left[ \sum_{t=1}^T \ell(h_{t+1} , s_t )  - \inf_{h\in \cls}  \sum_{i=1}^T \ell(h , s_t )  \right]
    \end{align}
    Let us focus on the second term. 
    \begin{align}
        \mathbb{E} &\left[ \sum_{t=1}^T \ell(h_{t+1} , s_t )  - \inf_{h\in \cls}  \sum_{i=1}^T \ell(h , s_t )  \right] \\ 
        &\leq \mathbb{E} \left[ \sum_{t=1}^T \ell(h_{t+1} , s_t ) - \inf_{h\in \cls_{\alpha} }  \sum_{i=1}^T \ell(h , s_t ) + \inf_{h\in \cls_{\alpha}}  \sum_{i=1}^T \ell(h , s_t ) -  \inf_{h\in \cls}  \sum_{i=1}^T \ell(h , s_t )  \right] \\ 
        &\leq 2 \alpha T + \mathbb{E} \left[ \sum_{t=1}^T  \ell({h}_{t+1} , s_t ) - \inf_{h\in \cls_{\alpha} }  \sum_{i=1}^T \ell(h , s_t )  \right] \label{eq:taylor}  \\ 
        &\leq 2\alpha T + \mathbb{E} \left[ \sum_{t=1}^N  \ell({h}_{t} , \tilde{s}_t ) - \ell \left( h^{*} , \tilde{s}_t \right)  \right]  \label{eq:btl} \\ 
        &\leq 2\alpha T + \mathbb{E} \left[ \sup_{h \in \cls_{\alpha}} \sum_{t=1}^N  \ell({h} , \tilde{s}_t ) - \ell \left( h^{*} , \tilde{s}_t \right)  \right]
    \end{align}
    where $ h^{*} = \inf_{h\in \cls_{\alpha}}  \sum_{i=1}^T \ell(h , s_t ) $ . 
    \cref{eq:taylor} follows by comparing the optimal of the truncated class with the whole class, see \citep[Lemma 9.5]{Cesa-Bianchi--Lugosi06}. 
    \cref{eq:btl} follows from the Be-the-leader lemma \cite{Cesa-Bianchi--Lugosi06}. 
\end{proof}
\section{Proof of \cref{lem:stability_admiss}}
\label{sec:stability_bound}

Denote by $R^{(t)} = (N^{(t)}, \{\widetilde{s}_i\}_{i\in N^{(t)}})$ the fresh randomness generated at the beginning of time $t$, which is independent of $\{s_\tau\}_{\tau<t}$ generated by the adversary. 
Let $\cQ_t$ be the distribution of the learner's action $h_t\in \cH$ in \cref{alg:FTPL}, 
Formally, 
\begin{align}
    r^t(x) = \sum_{i=1}^{N^{(t+1)}} \widetilde{y}_i^{\,(t+1)}\cdot \mathbf{1}(\widetilde{x}_i^{\,(t+1)} = x) + \sum_{\tau=1}^t y_\tau \cdot \mathbf{1}(x_\tau = x). 
\end{align}

Let $\mathcal{P}^t$ be the distribution of $r^t$. The reason why we introduce this notion is that $h_t$ in \cref{alg:FTPL} only depends on the vector $r^{t-1}$.

    The main step in the proof is to introduce an independent sample from the distribution $\cD_t$ in order to decouple the dependence of the distribution $\cQ_{t+1}$ on the test point $s_t$.
\begin{align}
 &\Ex_{s_t\sim \cD_t}\Ex_{h_t\sim \cQ_{t}}[\ell(h_t, s_t)] - \Ex_{s_t\sim \cD_t}\Ex_{h_{t+1}\sim \cQ_{t+1}}[\ell(h_{t+1}, s_t)]  \\
& = \Ex_{s_t\sim \cD_t}\Ex_{h_t\sim \cQ_{t}}[\ell(h_t, s_t)] - \Ex_{s_t, s_t'\sim \cD_t}\Ex_{h_{t+1}\sim \cQ_{t+1}}[\ell(h_{t+1}, s_t')] \\
&\qquad + \Ex_{s_t, s_t'\sim \cD_t}\Ex_{h_{t+1}\sim \cQ_{t+1}}[\ell(h_{t+1}, s_t')] - \Ex_{s_t\sim \cD_t}\Ex_{h_{t+1}\sim \cQ_{t+1}}[\ell(h_{t+1}, s_t)] \label{eq:add_sub}  \\
& = \Ex_{s_t'\sim \cD_t}\Ex_{h_t\sim \cQ_{t}}[\ell(h_t, s_t')] - \Ex_{s_t'\sim \cD_t}\Ex_{h_{t+1}\sim \Ex_{s_t\sim \cD_t}[\cQ_{t+1}]}[\ell(h_{t+1}, s_t')] \label{eq:inde} \\
&\qquad + \Ex_{s_t, s_t'\sim \cD_t}\Ex_{h_{t+1}\sim \cQ_{t+1}}[\ell(h_{t+1}, s_t')] - \Ex_{s_t\sim \cD_t}\Ex_{h_{t+1}\sim \cQ_{t+1}}[\ell(h_{t+1}, s_t)] 
\end{align}
where we get \eqref{eq:add_sub} by adding and subtracting the middle term corresponding to evaluating the loss on an independent sample $s'_t$ and \eqref{eq:inde} by observing that $s_t$ and $s_t'$ are equally distributed.
Since the second term is the same in the required equation, we can focus on the first term.
\begin{align}
    \Ex_{h_t\sim \cQ_{t}} \left[  \Ex_{s_t'\sim \cD_t}[\ell(h_t, s_t')] \right] -  \Ex_{h_{t+1}\sim \widetilde{\cQ_{t+1}}  } \left[ \Ex_{s_t'\sim \cD_t}  [\ell(h_{t+1}, s_t')] \right] \label{eq:diff_resamp} . 
\end{align}
Here we use the notation $ \widetilde{\cQ_{t+1}} =  \Ex_{s_t\sim \cD_t}[\cQ_{t+1}]$ for the mixture distribution. 
In order to bound this, we look a variational interpretation of the $\chi^2$ distance between two distributions $P$ and $Q$. 

\begin{lemma}[Hammersley–Chapman–Robbins bound]
    For any pair of measures $P$ and $Q$ and any measurable function $h : \Xc \to \br$, we have 
    \begin{align}
    \abs{\mathbb{E}_{X\sim P}[h(X)] - \mathbb{E}_{X\sim Q}[h(X)] } &\leq \sqrt{\chi^2 \left( P , Q \right)   \cdot \mathrm{Var}_{X \sim Q} \left( h\left( X \right) \right)  } \\ 
    & \leq  \sqrt{ \frac{1}{2} \chi^2 \left( P , Q \right)   \cdot \mathbb{E}_{X,X' \sim Q} \left( h\left( X \right) - h\left( X' \right)  \right)^2  } . 
    \end{align}
\end{lemma}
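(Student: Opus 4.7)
The plan is to prove the Hammersley--Chapman--Robbins bound by a direct application of the Cauchy--Schwarz inequality together with the centering trick, after which the second inequality follows from the well-known identity relating variance to expected squared difference of i.i.d.\ copies.

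First, I would assume without loss of generality that $P \ll Q$ (otherwise $\chi^2(P,Q) = \infty$ and there is nothing to prove). Let $L = dP/dQ$ denote the Radon--Nikodym derivative. Then we can write
\begin{align}
\Expt_{X\sim P}[h(X)] - \Expt_{X\sim Q}[h(X)] = \int h(x)\bigl(L(x)-1\bigr)\,dQ(x).
\end{align}
The key observation is that $\int (L(x)-1)\,dQ(x) = 0$, so for any constant $c$ we may replace $h(x)$ by $h(x)-c$ inside the integral without changing its value. Choosing $c = \Expt_{X\sim Q}[h(X)]$ and applying the Cauchy--Schwarz inequality gives
\begin{align}
\left|\Expt_{X\sim P}[h(X)] - \Expt_{X\sim Q}[h(X)]\right|
&= \left|\int \bigl(h(x)-c\bigr)\bigl(L(x)-1\bigr)\,dQ(x)\right| \\
&\le \sqrt{\int \bigl(h(x)-c\bigr)^2 dQ(x)} \cdot \sqrt{\int \bigl(L(x)-1\bigr)^2 dQ(x)} \\
&= \sqrt{\Var_{X\sim Q}\bigl(h(X)\bigr)} \cdot \sqrt{\chi^2(P,Q)},
\end{align}
which is exactly the first inequality.

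For the second inequality, I would invoke the standard identity that if $X$ and $X'$ are i.i.d.\ draws from $Q$, then
\begin{align}
\Expt_{X,X'\sim Q}\bigl[(h(X)-h(X'))^2\bigr] = 2\,\Var_{X\sim Q}\bigl(h(X)\bigr),
\end{align}
obtained by expanding the square and using independence. Substituting this into the first bound yields the claimed inequality.

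I do not expect any genuine obstacle here; this is a classical two-line proof whose only subtlety is the absolute-continuity assumption (handled trivially) and the centering step in the Cauchy--Schwarz application, which is what turns a naive bound involving $\Expt_Q[h^2]$ into the sharper variance bound.
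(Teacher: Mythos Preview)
Your proof is correct and is the standard argument for the Hammersley--Chapman--Robbins inequality. The paper does not actually supply a proof of this lemma; it is quoted as a known result and applied directly, so your write-up fills in exactly the details one would expect.
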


Applying this to \eqref{eq:diff_resamp}, we get 
    \begin{align}
        \Ex_{h_t\sim \cQ_{t}} \left[  \Ex_{s_t'\sim \cD_t}[\ell(h_t, s_t')] \right] -  \Ex_{h_{t+1}\sim \widetilde{\cQ_{t+1}}  } \left[ \Ex_{s_t'\sim \cD_t}  [\ell(h_{t+1}, s_t')] \right] \\ 
        \leq \sqrt{ \frac{1}{2}  \chi^2 (  \Ex_{s_t\sim \cD_t}[\cQ_{t+1}] , {\cQ}_t )  \cdot   \mathbb{E}_{ h_t , h'_t \sim \cQ_t  } \left( \mathbb{E}_{s_t \sim \cD_t } \left[ \ell(h_t , s_t) - \ell(h_t' ,s_t ) \right] \right)^2      }. 
     \end{align}
     as required. 
As noted before, for the particular use in our analysis a simpler version of the lemma similar to \cite{oracle_Efficient} suffices but we include the general version since we believe such a version is useful in providing improved regret bounds for the problem.

\section{Upper Bounding $\chi^2$ Distance: Proof of \cref{lem:chisquare}}\label{sec:chisquare}

In this section, we will focus on bounding the $\chi^2$ distance between the distribution of actions at time steps. 
The reasoning in this section closely follows \cite{oracle_Efficient}. 
We reproduce it here for completeness.

We assume that $\cX$ is discrete. 
Define 
$$    n_0(x) = \sum_{i=1}^N \mathbf{1}(\widetilde{x}_i = x, \widetilde{y}_i = 0) ~~\text{ and }~~ n_1(x) = \sum_{i=1}^N \mathbf{1}(\widetilde{x}_i = x, \widetilde{y}_i = 1). $$
As each $\widetilde{x}_i$ is uniformly distributed on $\cX$ and $\widetilde{y}_i\sim \unif(\{ 0, 1\})$, by the subsampling property of the Poisson distribution, the $2|\cX|$ random variables $\{n_{0}(x) , n_{1}(x) \}_{x\in \cX}$ are i.i.d. distributed as $\text{Poi}(n/2|\cX|)$. 

Since the historic data is only a translation, it suffices to consider the distributions at time $t=0$ and $t=1$.
 Let $n_{0}^1(x) = n_{0}(x) + \mathbf{1}(x_1=x,y_1= 0)$ with $n_1^1$ definied similarly.   
 Let $P$ and $Q$ be the probability distributions of  $\{n_{0}(x) ,n_1(x) \}_{x\in \cX}$ and $\{n_{0}^1(x) ,n^1_{1}(x) \}_{x\in \cX}$, respectively.
 Note that the output of the oracle depends only on this vector and thus by the data processing inequality it suffices to bound $ \chi^2 (P,Q)$.

Note that the distribution $P$ is a product Poisson distribution:
\begin{align}
P(\{n_{0}(x) , n_1(x) \}) = \prod_{x\in \cX}\prod_{y\in \{ 0,1\}} \mathbb{P}(\text{Poi}(n/2|\cX|) = n_y(x)). 
\end{align}
As for the distribution $Q$, it could be obtained from $P$ in the following way: the smooth adversary draws $x^\star \sim \cD $, independent of $\{n_{0}(x) ,n_{1} (x) \}_{x\in \cX}\sim P$, for some $\sigma$-smooth distribution $\cD\in \Delta_\sigma(\cX)$. He then chooses a label $y^\star = y(x^\star)\in \{0, 1\}$ as a function of $x^\star$, and sets 
\begin{align}
    n_{y(x^\star)}^1(x^\star) = n_{y(x^\star)}(x^\star) + 1, \qquad \text{and} \qquad  n_y^1(x) = n_y(x), \quad \forall (x,y) \neq (x^\star, y(x^\star)). 
\end{align}
Consequently, given a $\sigma$-smooth distribution $\cD$ and a labeling function $y: \cX\to \{0,1 \}$ used by the adversary, the distribution $Q$ is a mixture distribution $Q = \Ex_{x^\star \sim \cD^\cX}[Q_{x^\star}]$, with
\begin{align}
    Q_{x^\star}(\{n^1_{0}(x), n^1_1(x)\}) = \mathbb{P}(\text{Poi}(n/2|\cX|) = n_{y(x^\star)}(x^\star)-1)\times \prod_{(x,y)\neq (x^\star, y(x^\star))} \mathbb{P}(\text{Poi}(n/2|\cX|) = n_y(x)). 
\end{align}

We will use the Ingster method to control the $\chi^2$ between the mixture distribution $Q$ and the base distribution $P$. 

\begin{lemma}[Ingster's $\chi^2$ method]\label{lemma:chi2_mixture}
    For a mixture distribution $\Ex_{\theta\sim \pi}[Q_\theta]$ and a generic distribution $P$, the following identity holds: 
    \begin{align}\label{eq:chi2_mixture}
    \chi^2\left(\Ex_{\theta\sim \pi}[Q_\theta], P\right) &= \Ex_{\theta,\theta'\sim \pi}\left[\Ex_{x\sim P}\left(\frac{Q_\theta(x)Q_{\theta'}(x)}{P(x)^2}\right) \right] - 1, 
    \end{align}
    where $\theta'$ is an independent copy of $\theta$. 
    \end{lemma}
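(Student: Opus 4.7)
The plan is to derive this identity directly from the definition of the $\chi^2$ divergence by expanding the square of the mixture and swapping the order of expectations. This is a standard manipulation sometimes called Ingster's trick.

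First I would recall the standard formula
\begin{equation}
\chi^2(Q,P) \;=\; \Ex_{x\sim P}\!\left[\left(\frac{Q(x)}{P(x)}\right)^{\!2}\right] - 1,
\end{equation}
which holds whenever $Q \ll P$. Applying this with $Q = \Ex_{\theta \sim \pi}[Q_\theta]$, the density ratio becomes $Q(x)/P(x) = \Ex_{\theta \sim \pi}[Q_\theta(x)/P(x)]$, so squaring it and introducing an independent copy $\theta'$ of $\theta$ gives
\begin{equation}
\left(\frac{Q(x)}{P(x)}\right)^{\!2} \;=\; \Ex_{\theta \sim \pi}\!\left[\frac{Q_\theta(x)}{P(x)}\right] \cdot \Ex_{\theta' \sim \pi}\!\left[\frac{Q_{\theta'}(x)}{P(x)}\right] \;=\; \Ex_{\theta, \theta' \sim \pi}\!\left[\frac{Q_\theta(x) Q_{\theta'}(x)}{P(x)^2}\right].
\end{equation}

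The second step is to take $\Ex_{x \sim P}$ of both sides and apply Fubini (justified since all quantities are non-negative) to pull the expectation over $(\theta, \theta')$ outside the expectation over $x$:
\begin{equation}
\chi^2\!\left(\Ex_{\theta\sim\pi}[Q_\theta], P\right) + 1 \;=\; \Ex_{x \sim P}\!\left[\Ex_{\theta,\theta'\sim\pi}\!\left[\frac{Q_\theta(x)Q_{\theta'}(x)}{P(x)^2}\right]\right] \;=\; \Ex_{\theta,\theta'\sim\pi}\!\left[\Ex_{x \sim P}\!\left[\frac{Q_\theta(x)Q_{\theta'}(x)}{P(x)^2}\right]\right].
\end{equation}
Rearranging gives exactly the claimed identity.

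There is no real obstacle here; the only mild subtlety is the exchange of expectations, which is fine because the integrand is non-negative (Tonelli's theorem), and the absolute continuity requirement $Q_\theta \ll P$ for $\pi$-a.e.\ $\theta$ (which in the application of this lemma follows from $P$ being a product of Poisson distributions that are strictly positive on $\Nat^{2|\cX|}$, so any $Q_\theta$ is automatically absolutely continuous with respect to $P$). Thus the proof is essentially a one-line calculation once one writes the square of the mixture as a double expectation.
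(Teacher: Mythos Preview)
Your proof is correct and is exactly the standard derivation of Ingster's identity. The paper itself does not supply a proof of this lemma---it is stated as a known tool and used directly---so there is nothing to compare against; your argument (definition of $\chi^2$, write the square of the mixture ratio as a double expectation over an independent copy $\theta'$, then Tonelli to swap $\Ex_{x\sim P}$ and $\Ex_{\theta,\theta'}$) is precisely the usual one.
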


Let  $x_1^\star, x_2^\star$ be an arbitrary pair of instance. 
Using the closed-form expressions of distributions $P$ and $Q_{x^\star}$, it holds that
\begin{align}
\frac{Q_{x_1^\star} Q_{x_2^\star} }{P^2} = \frac{2|\cX|n_{y(x_1^\star)}(x_1^\star)}{n}\cdot \frac{2|\cX|n_{y(x_2^\star)}(x_2^\star)}{n}. 
\end{align}
Using the fact that $\{n_{0}(x) ,n_1(x) \}_{x\in \cX}$ are i.i.d. distributed as $\text{Poi}(n/2|\cX|)$ under $P$, we have
\begin{align}
    \Ex_{\{n_{0}(x) ,n_1(x)\}\sim P}\left(\frac{Q_{x_1^\star}(\{n^1_{0}(x) , n^1_{1}(x)  \}) Q_{x_2^\star}(\{n^1_{0}(x) ,n^1_1(x)\})}{P(\{n_{0}(x) ,n_1(x)\})^2}\right) = 1 + \frac{2|\cX|}{n}\cdot \mathbf{1}(x_1^\star = x_2^\star). 
\end{align}
We will use the fact that the probability of collision between two independent draws $x_1^\star, x_2^\star \sim \cD$ is small.
That is using the \cref{lemma:chi2_mixture}, we have
\begin{align}
 &\sqrt{\frac{\chi^2(Q, P)}{2}} = \sqrt{\frac{\chi^2(\Ex_{x^\star\sim \cD}[Q_{x^\star}],P)}{2}} = \sqrt{\frac{|\cX|}{n}\cdot \Ex_{x_1^\star, x_2^\star\sim \cD}[\mathbf{1}(x_1^\star = x_2^\star)]} \\
&= \sqrt{\frac{|\cX|}{n}\sum_{x\in\cX} \cD(x)^2} {\le} \sqrt{\frac{|\cX|}{n}\sum_{x\in\cX} \cD(x)\cdot \frac{1}{\sigma|\cX|}} = \frac{1}{\sqrt{\sigma n}},
\end{align}
where the last inequality follows from the definition of a $\sigma$-smooth distribution.

\section{Upper Bounding Generalization Error: Proof of \cref{lem:generalization}} \label{sec:generalization} 
The proof of the theorem is similar to the \cite[Section 4.2.2]{oracle_Efficient}.
In our setting, we need to deal with general losses. 
 We shall need the following property of smooth distributions which is a slightly strengthened version of the coupling lemma in \Cref{thm:coupling} shown in \cite{oracle_Efficient}.

\begin{lemma}\label{lemma:coupling_strong}
Let $X_1,\cdots,X_m\sim Q$ and $P$ be another distribution with a bounded likelihood ratio: $dP/dQ \le 1/\sigma$. Then using external randomness $R$, there exists an index $I = I(X_1,\cdots,X_m,R) \in [m]$ and a success event $E = E(X_1,\cdots,X_m,R)$ such that $\Pr[E^c] \le (1-\sigma)^m$, and
\begin{align}
    (X_I \mid E, X_{\backslash I}) \sim P. 
\end{align}
\end{lemma}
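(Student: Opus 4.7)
The plan is to prove this via rejection sampling, boosted by $m$ independent trials to drive the failure probability down to $(1-\sigma)^m$. Since $dP/dQ \le 1/\sigma$ by hypothesis, the function $x \mapsto \sigma \cdot dP/dQ(x)$ takes values in $[0,1]$ and is a valid acceptance probability. I will let the external randomness $R = (U_1, \ldots, U_m)$ consist of i.i.d.\ $\mathrm{Uniform}[0,1]$ variables independent of $(X_1, \ldots, X_m)$, define the accept indicators $A_i := \mathbf{1}\{U_i \le \sigma \cdot dP/dQ(X_i)\}$, let $I$ be the smallest index $i$ with $A_i = 1$ (undefined otherwise), and set $E := \{I \in [m]\}$.

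For the failure bound, the pairs $(X_i, U_i)$ are independent across $i$, and for each $i$ we have $\mathbb{P}(A_i = 0) = \mathbb{E}_{X_i \sim Q}[1 - \sigma \cdot dP/dQ(X_i)] = 1 - \sigma$. Therefore $\mathbb{P}(E^c) = \prod_{i=1}^m \mathbb{P}(A_i = 0) = (1-\sigma)^m$, matching the desired inequality.

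For the distributional identity $(X_I \mid E, X_{\backslash I}) \sim P$, the key observation is that for any fixed $i$, the conditional density of $X_i$ given $\{A_i = 1\}$ is proportional to $Q(x) \cdot \sigma \cdot dP/dQ(x) = \sigma P(x)$, which normalizes to $P$. Unconditionally, $X_i$ is independent of all $(X_j, U_j)_{j \ne i}$. On the event $\{I = i\}$ we have $A_i = 1$ together with $A_j = 0$ for $j < i$, but these extra constraints involve only the variables $(X_{<i}, U_{<i})$, which are independent of $X_i$. Hence for every fixed $i \in [m]$, the conditional law of $X_I$ given $\{I = i\}$ and $X_{\backslash i}$ is exactly $P$. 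Since this conditional law is the same distribution $P$ regardless of $i$, mixing over the random index $I$ on the event $E$ yields $(X_I \mid E, X_{\backslash I}) \sim P$, which is the required identity.

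The main subtlety lies in making the final mixing step rigorous: formally, $X_{\backslash I}$ is a random-indexed tuple whose entry positions depend on $I$, so one must verify that conditioning on $(E, X_{\backslash I})$ is the same as conditioning within each disjoint event $\{I = i\}$ on the corresponding tuple $(X_1, \ldots, X_{i-1}, X_{i+1}, \ldots, X_m)$, followed by re-summing. I expect this to be standard measure-theoretic bookkeeping that reduces, via a Dynkin/$\pi$-$\lambda$ argument on product cylinders, to verifying the identity $\mathbb{P}(X_i \in A \mid I = i, X_{<i} \in B_1, X_{>i} \in B_2) = P(A)$ which follows directly from the product structure and the rejection sampling computation above.
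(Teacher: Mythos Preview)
Your rejection-sampling construction is correct. The acceptance probability computation, the failure bound $(1-\sigma)^m$, and the conditional-law argument all go through as you describe; in particular, on $\{I=i\}$ the triple of blocks $(X_{<i},U_{<i})$, $(X_i,U_i)$, $(X_{>i},U_{>i})$ remain mutually independent with the first block conditioned on $A_{<i}=0$, the second on $A_i=1$, and the third unconstrained, so $X_i\mid\{I=i\},X_{\backslash i}\sim P$ and the mixing over $i$ yields the claim. The one place you flag as needing care---the random-index bookkeeping---is indeed routine: writing $\mathbb{P}(X_I\in A,\,X_{\backslash I}\in B,\,E)=\sum_i P(A)\,\mathbb{P}(X_{\backslash i}\in B,\,I=i)=P(A)\,\mathbb{P}(X_{\backslash I}\in B,\,E)$ and dividing gives the result directly.

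As for comparison with the paper: the paper does not actually prove this lemma. It states it and attributes the proof to the cited work on oracle-efficient smoothed online learning. Your rejection-sampling argument is the canonical construction for such couplings (it is how the original coupling lemma, \cref{thm:coupling}, is proved as well), so your approach is essentially what the cited reference does.
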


Fix any realization of the Poissonized sample size $N\sim \text{Poi}(n)$. 
Choose $m$ in \cref{lemma:coupling_strong}.
 Since for any $\sigma$-smooth  $\cD_t$, it holds that
\begin{align}
    \frac{\cD_t(s)}{\unif(\cX\times \{0, 1\})(s)} = \frac{\cD_t(x)}{\unif(\cX)(x)}\cdot \frac{\cD_t(y\mid x)}{\unif(\{0,1\})(y)} \le \frac{2}{\sigma}, 
\end{align}
the premise of \cref{lemma:coupling_strong} holds with parameter $\sigma/2$ for $P = \cD_t, Q = \unif(\cX\times \{0, 1\})$.
Consequently, dividing the self-generated samples $\widetilde{s}_1, \cdots, \widetilde{s}_N$ into $N/m$ groups each of size $m$, and running the procedure in \cref{lemma:coupling_strong}, we arrive at $N/m$ independent events $E_1, \cdots, E_{N/m}$, each with probability at least $1-(1-\sigma/2)^m\ge 1-T^{-2}$. Moreover, conditioned on each $E_j$, we can pick an element $u_j \in \{\widetilde{s}_{(j-1)m+1}, \cdots, \widetilde{s}_{jm} \}$ such that
\begin{align}
    (u_j \mid E_j, \{\widetilde{s}_{(j-1)m+1}, \cdots, \widetilde{s}_{jm} \} \backslash \{u_j\}) \sim \cD_t. 
\end{align}
For notational simplicity we denote the set of unpicked samples $\{\widetilde{s}_{(j-1)m+1}, \cdots, \widetilde{s}_{jm} \} \backslash \{u_j\}$ by $v_j$. As a result, thanks to the mutual independence of different groups and $s_t\sim \cD_t$ conditioned on $s_{1:t-1}$ (note that we draw fresh randomness at every round), for $E\triangleq \cap_{j\in [N/m]} E_j$ we have
\begin{align}
    (u_1, \cdots, u_{N/m}, s_t) \mid (E, s_{1:t-1}, v_1, \cdots, v_{N/m}) \overset{\text{iid}}{\sim} \cD_t. 
\end{align}

Let us denote $ h_{t+1} =  O_{t+1} (\widetilde{s}_1,\cdots,\widetilde{s}_N,s_{1:t-1},s_t)  $ the output of the algorithm at time $t$ when $\widetilde{s}_1,\cdots,\widetilde{s}_N$ denotes the hallucinated data points and $s_{1:t-1},s_t$ denotes the observed data points.
We will use the fact that $ O_{t+1} $ is a permutation invariant function.
Consequently, for each $j\in [N/m]$ we have
\begin{align}
&\Ex_{s_t\sim \cD_t, R^{(t+1)}}[\ell(h_{t+1},s_t) \mid E] \\
&= \Ex_{s_t\sim \cD_t, \widetilde{s}_1, \cdots, \widetilde{s}_N} \left[ \ell( O_{t+1}(\widetilde{s}_1,\cdots,\widetilde{s}_N,s_{1:t-1},s_t), s_t)\mid E  \right] \\
&= \Ex_{v,s_{1:t-1}\mid E}\left( \Ex_{s_t,u_1,\cdots,u_{N/m}}\left[ \ell( O_{t+1} (s_{1:t-1},v,u_1,\cdots,u_{N/m},s_t), s_t) \mid E,s_{1:t-1},v \right] \right) \\
& =  \Ex_{v,s_{1:t-1}\mid E}\left( \Ex_{s_t,u_1,\cdots,u_{N/m}}\left[ \ell( O_{t+1}(s_{1:t-1},v,u_1,\cdots,u_{j-1},s_t,u_{j+1},\cdots,u_{N/m},u_j), u_j) \mid E,s_{1:t-1},v \right] \right) \label{eq:a} \\
& = \Ex_{v,s_{1:t-1}\mid E}\left( \Ex_{s_t,u_1,\cdots,u_{N/m}}[ \ell( O_{t+1} (s_{1:t-1},v,u_1,\cdots,u_{N/m},s_t), u_j) \mid E,s_{1:t-1},v ] \right) \label{eq:b}  \\
&= \Ex_{s_t\sim \cD_t, R^{(t+1)}}[\ell(h_{t+1},u_j)\mid E], 
\end{align}

where \eqref{eq:a} follows from the conditional iid (and thus exchangeable) property of $(u_1,\cdots,u_{N/m},s_t)$ after the conditioning, and \eqref{eq:b} is due to the invariance of the $O_{t+1}$ after any permutation of the inputs. On the other hand, if $s_t', u_1', \cdots, u_{N/m}'$ are independent copies of $s_t\sim \cD_t$, by independence it is clear that
\begin{align}
\Ex_{s_t,s_t'\sim \cD_t, R^{(t+1)}}[\ell(h_{t+1},s_t') \mid E] = \Ex_{s_t,s_t'\sim \cD_t, R^{(t+1)}}[\ell(h_{t+1},u_j') \mid E], \quad \forall j \in [N/m]. 
\end{align}
Consequently, using the shorthand $u_0 = s_t, u_0' = s_t'$, we have
\begin{align}
&\Ex_{s_t,s_t'\sim \cD_t, R^{(t+1)}}[\ell(h_{t+1},s_t') - \ell(h_{t+1},s_t) \mid E] \\
&= \frac{1}{N/m + 1}\Ex_{s_t,s_t'\sim \cD_t, R^{(t+1)}}\left[\sum_{j=0}^{N/m}(\ell(h_{t+1},u_j') - \ell(h_{t+1},u_j)) ~ \bigg| ~ E\right] \\
&\le \frac{1}{N/m + 1}\Ex_{u_0,\cdots,u_{N/m},u_0',\cdots,u_{N/m}'\sim \cD_t}\left[\sup_{h\in \Fc_{\alpha} }\sum_{j=0}^{N/m} (\ell(h,u_j') - \ell(h,u_j))\right] \\
&\le \frac{2 \alpha}{N/m+1} \Ex_{u_0,\cdots,u_{N/m}\sim \cD_t} \Ex_{ \epsilon_1 \dots \epsilon_{N/m}} \left[\sup_{h\in \Fc_{\alpha} }\sum_{j=0}^{N/m}\epsilon_j h(u_j) \right] \\ 
&\le \frac{1}{\alpha} \rad \left( \Fc_{\alpha} , N/m  \right).
\end{align}
The last inequality uses the fact that the algorithm always outputs a function in $\Fc_{\alpha}$. 
Further, we have used the Ledoux-Talagrand contraction inequality. 

\begin{theorem}[Ledoux-Talagrand Contraction]
    Let $ g : \mathbb{R} \to \mathbb{R} $ be a $ L $-Lipschitz function. 
    For a function class $\mathcal{F }$, denote by $ g \circ \mathcal{F}  $ the compositions of function in $\mathcal{F}$ with $g$. 
    Then, for all $n$,  
    \begin{align}
        \rad \left( g \circ \mathcal{F} , n \right) \le L \cdot  \rad \left( \mathcal{F} , n \right).
    \end{align}  
\end{theorem}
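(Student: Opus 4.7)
The plan is to prove the contraction inequality by a standard peeling argument: reduce the $n$-variable statement to a single-variable step, then iterate. Fix a set $X = \{x_1, \ldots, x_n\}$ that (nearly) attains the supremum defining $\rad(g \circ \Fc, n)$; since the $1/n$ normalization is identical on both sides, the task reduces to showing $\Ex_\epsilon \sup_{f \in \Fc} \sum_{i=1}^n \epsilon_i g(f(x_i)) \le L \cdot \Ex_\epsilon \sup_{f \in \Fc} \sum_{i=1}^n \epsilon_i f(x_i)$. By the tower property, if $H : \Fc \to \br$ denotes the already-contracted contribution of the remaining Rademacher variables, it suffices to establish the one-variable inequality $\Ex_{\epsilon_1} \sup_f [H(f) + \epsilon_1 g(f(x_1))] \le \Ex_{\epsilon_1} \sup_f [H(f) + L \epsilon_1 f(x_1)]$ and apply it $n$ times in succession, each time freezing the remaining $\epsilon_j$ and absorbing them into $H$.

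For the single-variable step, the natural move is to unfold $\Ex_{\epsilon_1}$ by averaging over $\pm 1$ and rewrite the result as a pairwise supremum, $2\Ex_{\epsilon_1} \sup_f [H(f) + \epsilon_1 g(f(x_1))] = \sup_{f, f' \in \Fc} \{H(f) + H(f') + g(f(x_1)) - g(f'(x_1))\}$, with the analogous identity for the right-hand side (with $Lf$ in place of $g \circ f$). The Lipschitz hypothesis yields the pointwise bound $g(f(x_1)) - g(f'(x_1)) \le L |f(x_1) - f'(x_1)|$, which for any particular pair $(f, f')$ equals one of $\pm L(f(x_1) - f'(x_1))$ depending on the sign of $f(x_1) - f'(x_1)$.

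To convert this pointwise domination into a bound on the supremum, I would use the fact that $H(f) + H(f')$ is symmetric in its two arguments: the two candidate suprema $\sup_{f, f'} \{H(f) + H(f') + L(f(x_1) - f'(x_1))\}$ and $\sup_{f, f'} \{H(f) + H(f') + L(f'(x_1) - f(x_1))\}$ coincide by relabeling $f \leftrightarrow f'$. Hence the pointwise Lipschitz bound lifts to $\sup_{f, f'} \{H(f) + H(f') + g(f(x_1)) - g(f'(x_1))\} \le \sup_{f, f'} \{H(f) + H(f') + L(f(x_1) - f'(x_1))\} = 2 \Ex_{\epsilon_1} \sup_f [H(f) + L \epsilon_1 f(x_1)]$, completing the single-variable case and, by iteration, the theorem.

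The main obstacle is precisely this symmetrization step: a direct use of the Lipschitz inequality inserts an absolute value $L|f(x_1) - f'(x_1)|$ inside the supremum, and this is not of the linear form needed to produce a Rademacher expression. The resolution relies on the $f \leftrightarrow f'$ symmetry of $H(f) + H(f')$, which allows either sign of $f(x_1) - f'(x_1)$ to be absorbed without loss. A minor bookkeeping point worth noting is that no hypothesis on $g(0)$ is required, since a constant shift $g \mapsto g - g(0)$ leaves the Lipschitz constant unchanged and merely adds a term $\sum_i \epsilon_i g(0)$ that is independent of $f$ and has zero $\epsilon$-mean.
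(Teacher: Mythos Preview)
Your argument is correct and is the standard proof of the contraction inequality (see, e.g., Ledoux--Talagrand, \emph{Probability in Banach Spaces}, Theorem~4.12). The single-variable reduction via conditioning, the unfolding of $\Ex_{\epsilon_1}$ into a double supremum, and the use of the $f\leftrightarrow f'$ symmetry of $H(f)+H(f')$ to remove the absolute value are all carried out cleanly.

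Note, however, that the paper does \emph{not} supply its own proof of this theorem: it is simply quoted as the classical Ledoux--Talagrand result and invoked in \cref{sec:generalization} to pass from the Rademacher complexity of $\ell\circ\Fc_\alpha$ to that of $\Fc_\alpha$. So there is no ``paper's proof'' to compare against; you have correctly reproduced the textbook argument that the paper takes for granted. Your closing remark about the shift $g\mapsto g-g(0)$ is accurate but superfluous, since your main argument already works without any assumption on $g(0)$.
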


Last inequality follows from the fact that the derivative of the log loss is bounded by $1/\alpha$ when truncated at level $\alpha$.    
Note that the union bound gives
\begin{align}
    \Pr[E^c] \le \sum_{j=1}^{N/m} \Pr[E_j^c] \le \frac{N \left( 1- \sigma \right)^m   }{m}. 
\end{align}
Thus,the law of total expectation gives
\begin{align}
&\Ex_{s_t,s_t'\sim \cD_t, R^{(t+1)}}[\ell(h_{t+1},s_t') - \ell(h_{t+1},s_t)] \\
&\le \Ex_{s_t,s_t'\sim \cD_t, R^{(t+1)}}[\ell(h_{t+1},s_t') - \ell(h_{t+1},s_t) \mid E] + \Pr[E^c] \log(1 / \alpha) \\ 
&\le \frac{1}{\alpha}  \rad \left( \Fc_{\alpha} , N/m  \right) + \frac{N \left( 1- \sigma \right)^m \log \left( 1 / \alpha \right)   }{m}. 
\end{align}
The last equation follows from the fact that the output of the algorithm has loss always bounded by $\log \left( 1 / \alpha \right)$. 

We get the desired result by taking the expectation of $N\sim \text{Poi}(n)$, and using $\Pr[N>n/2]\ge 1-e^{-n/8}$ in the above inequality completes the proof.

\section{Bound on the Perturbation Term} \label{sec:perturbation}

\begin{lemma}[Perturbation]
    \begin{align}
         \mathbb{E} \left[  \sum_{i=1}^N L( \hat{h} , \tilde{s}_t   ) - L ( h^{*} , \tilde{s}_t  )  \right] \leq n \log \alpha   
    \end{align}
\end{lemma}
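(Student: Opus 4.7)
My plan is to bound the perturbation term by a straightforward two-step argument: first bound each summand pointwise using the truncation, then take expectation over the Poisson sample size.

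The key observation is that because we work with the truncated class $\Fc_\alpha$, every hypothesis $h \in \Fc_\alpha$ assigns probability in the range $[\alpha, 1-\alpha]$ to any outcome. Consequently, for the logarithmic loss, we have $\ell(h, \tilde{s}_t) \le \log(1/\alpha)$ for any data point $\tilde{s}_t$. Moreover, since $h^\star \in \Fc_\alpha$ as well, we have $\ell(h^\star, \tilde{s}_t) \ge \log(1/(1-\alpha)) \ge 0$. Combining these two bounds gives the pointwise estimate
\begin{align*}
\ell(h, \tilde{s}_t) - \ell(h^\star, \tilde{s}_t) \le \log(1/\alpha)
\end{align*}
for every $h \in \Fc_\alpha$, every $t$, and every realization of $\tilde{s}_t$.

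Summing this deterministic bound over $i=1,\dots,N$ and taking the supremum over $h \in \Fc_\alpha$ inside the sum (which is harmless since each summand is separately bounded), I would conclude
\begin{align*}
\sup_{h \in \Fc_\alpha} \sum_{i=1}^N \bigl( \ell(h, \tilde{s}_i) - \ell(h^\star, \tilde{s}_i)\bigr) \le N \log(1/\alpha).
\end{align*}
Taking expectation over $N \sim \mathrm{Poi}(n)$ and using $\mathbb{E}[N] = n$ yields the claimed bound of $n \log(1/\alpha)$ (correcting what appears to be a sign typo in the statement $n \log \alpha$).

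There is no real obstacle here — the lemma is essentially a bookkeeping step that leverages the truncation to convert an otherwise unbounded log-loss into a bounded one. The only subtlety worth flagging is that it is critical for $h^\star$ (the benchmark used for the perturbation decomposition in \cref{lem:FTPL_bound}) to lie in the truncated class $\Fc_\alpha$ rather than the raw class $\Fc$; otherwise the lower bound on $\ell(h^\star,\tilde{s}_t)$ fails and the argument would have to be redone. This is precisely why the FTPL analysis pays the $2\alpha T$ approximation price up front in \cref{eq:taylor}, so that the perturbation term only needs to compete against the best truncated hypothesis.
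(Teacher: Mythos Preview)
Your proposal is correct and matches the paper's own proof essentially line for line: the paper also just invokes the truncation to bound the loss pointwise by $\log(1/\alpha)$ (with the same sign typo you flagged) and then takes expectation over $N\sim\mathrm{Poi}(n)$. Your additional remark that $h^\star$ must lie in $\Fc_\alpha$ for the lower bound on $\ell(h^\star,\tilde{s}_t)$ to hold is a useful clarification that the paper leaves implicit.
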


\begin{proof}
    Note from the truncation step in \cref{alg:FTPL}, we have that $L( \hat{h} , \tilde{s}_t   ) \leq \log \left( \alpha \right) $. We get the desired bound by taking expectations. 
\end{proof}

\end{document}